\title{Learning with the Weighted Trace-norm under Arbitrary Sampling Distributions}
\author{Rina Foygel \\  Department of Statistics\\ University of Chicago \\ \texttt{rina@uchicago.edu} \\ \And 
Ruslan Salakhutdinov \\ Department of Brain and Cognitive Sciences and CSAIL\\ Massachusetts Institute of Technology \\ \texttt{rsalakhu@mit.edu}\\\AND
Ohad Shamir \\Microsoft Research New England \\ \texttt{ohadsh@microsoft.com}\\ \And
Nathan Srebro\\Toyota Technological Institute at Chicago\\ \texttt{nati@ttic.edu}\\}
\newtheorem*{rep@theorem}{\rep@title}
\newcommand{\newreptheorem}[2]{%
\newenvironment{rep#1}[1]{%
 \def\rep@title{#2 \ref{##1}}%
 \begin{rep@theorem}}%
 {\end{rep@theorem}}}
\newtheorem{lemma}{Lemma}
\newcommand{\pp}{p}
\newcommand{\p}[2]{\pp\left({#1},{#2}\right)}
\newcommand{\pij}{\p{i}{j}}
\newcommand{\pprow}{\pp^r}
\newcommand{\ppcol}{\pp^c}
\newcommand{\prow}[1]{\pprow\left({#1}\right)}
\newcommand{\pcol}[1]{\ppcol\left({#1}\right)}
\newcommand{\prowi}{\prow{i}}
\newcommand{\pcolj}{\pcol{j}}
\newcommand{\tr}{\mathrm{tr}}
\newcommand{\wtr}{\tr\left(\pprow,\ppcol\right)}
\newcommand{\twtr}{\tr\left(\tpprow,\tppcol\right)}
\newcommand{\hwtr}{\tr\left(\hpprow,\hppcol\right)}
\newcommand{\cwtr}{\tr\left(\cpprow,\cppcol\right)}
\newcommand{\trnorm}[1]{\left\|{#1}\right\|_{\tr}}
\newcommand{\wtrnorm}[1]{\left\|{#1}\right\|_{\wtr}}
\newcommand{\twtrnorm}[1]{\left\|{#1}\right\|_{\twtr}}
\newcommand{\hwtrnorm}[1]{\left\|{#1}\right\|_{\hwtr}}
\newcommand{\cwtrnorm}[1]{\left\|{#1}\right\|_{\cwtr}}
\newcommand{\frnorm}[1]{\left\|{#1}\right\|_F}
\newcommand{\wfrnorm}[1]{\left\|{#1}\right\|_{F\left(\pprow,\ppcol\right)}}
\newcommand{\tpp}{\tilde{p}}
\newcommand{\tpprow}{\tpp^r}
\newcommand{\tppcol}{\tpp^c}
\newcommand{\tprow}[1]{\tpprow\left({#1}\right)}
\newcommand{\tpcol}[1]{\tppcol\left({#1}\right)}
\newcommand{\tprowi}{\tprow{i}}
\newcommand{\tpcolj}{\tpcol{j}}
\newcommand{\hpp}{\hat{p}}
\newcommand{\hpprow}{\hpp^r}
\newcommand{\hppcol}{\hpp^c}
\newcommand{\hprow}[1]{\hpprow\left({#1}\right)}
\newcommand{\hpcol}[1]{\hppcol\left({#1}\right)}
\newcommand{\hprowi}{\hprow{i}}
\newcommand{\hpcolj}{\hpcol{j}}
\newcommand{\cpp}{\check{p}}
\newcommand{\cpprow}{\cpp^r}
\newcommand{\cppcol}{\cpp^c}
\newcommand{\cprow}[1]{\cpprow\left({#1}\right)}
\newcommand{\cpcol}[1]{\cppcol\left({#1}\right)}
\newcommand{\cprowi}{\cprow{i}}
\newcommand{\cpcolj}{\cpcol{j}}
\newcommand{\opp}{\overline{p}}
\newcommand{\opprow}{\opp^r}
\newcommand{\oppcol}{\opp^c}
\newcommand{\oprow}[1]{\opprow\left({#1}\right)}
\newcommand{\opcol}[1]{\oppcol\left({#1}\right)}
\newcommand{\oprowi}{\oprow{i}}
\newcommand{\opcolj}{\opcol{j}}
\newcommand{\R}{\mathbb{R}}
\newcommand{\infnorm}[1]{\left|{#1}\right|_{\infty}}
\newcommand{\indicator}[1]{\mathbb{I}\left\{{#1}\right\}}
\newcommand{\Zcal}{\mathcal{Z}}
\newcommand{\Xcal}{\mathcal{X}}
\newcommand{\W}[1]{\mathcal{W}_r\left[{#1}\right]}
\newcommand{\Wcal}{\W{\pp}}
\newcommand{\tWcal}{\W{\tpp}}
\newcommand{\cWcal}{\W{\cpp}}
\newcommand{\oWcal}{\W{\opp}}
\newcommand{\Diag}[1]{\mathrm{diag}\left({#1}\right)}
\newcommand{\Ical}{\mathcal{I}}
\newcommand{\rank}[1]{\mathrm{rank}\left({#1}\right)}
\newcommand{\Rcal}{\mathcal{R}}
\newcommand{\reals}{\mathbb{R}}
\newtheorem{theorem}{Theorem}
\newcommand{\norm}[1]{\left\lVert{#1}\right\rVert}
\newcommand{\empRad}{\hat{\Rcal}}
\newcommand{\EE}[1]{{{\mbox{\bf E}}\left[{#1}\right]}}
\newcommand{\Ep}[2]{{{\mbox{\bf E}}_{#1}\left[{#2}\right]}}
\newcommand{\spnorm}[1]{\norm{#1}_{\mathrm{sp}}}
\begin{document}

\maketitle

\begin{abstract}
  We provide rigorous guarantees on learning with the weighted
  trace-norm under arbitrary sampling distributions.  We show that
   the standard weighted trace-norm might fail
   when the sampling distribution is not a product distribution
  (i.e.~when row and column indexes are not selected independently), present a corrected
  variant for which we establish strong learning guarantees, and
  demonstrate that it works better in practice.  We provide guarantees
  when weighting by either the true or empirical sampling
  distribution, and suggest that even if the true distribution is
  known (or is uniform), weighting by the empirical distribution may
  be beneficial.
\end{abstract}

\section{Introduction}

One of the most common approaches to collaborative filtering and
matrix completion is trace-norm regularization
\citep{SrebRenJaa04,SalaMni07,Bach08,CanTao09}.  In this approach we
attempt to complete an unknown matrix, based on a small subset of
revealed entries, by finding a matrix with small trace-norm, which
matches those entries as best as possible.

This approach has repeatedly shown good performance in practice, and
is theoretically well understood for the case where revealed entries
are sampled
uniformly~\citep{ShraibmanSrebro,Recht,KMO,Koltchinskii,NW,RinaNatiCOLT}.
Under such uniform sampling, $\Theta(n\log (n))$ entries are sufficient for
good completion of an $n \times n$ matrix---i.e.~a nearly constant
number of entries per row.  However, for arbitrary sampling
distributions, the worst-case sample complexity lies between a lower
bound of $\mathbf{\Omega}(n^{4/3})$~\citep{RusAndNatiNIPS} and an upper bound
of $\mathbf{O}(n^{3/2})$~\citep{S_SS_COLT}, i.e.~requiring between $n^{1/3}$
and $n^{\nicefrac{1}{2}}$ observations per row, and indicating it is not
appropriate for matrix completion in this setting.

Motivated by these issues, Salakhutdinov and Srebro
\citep{RusAndNatiNIPS} proposed to use a weighted variant of the
trace-norm, which takes the distribution of the entries into account,
and showed experimentally that this variant indeed leads to superior
performance.  However, although this recent paper established that the
weighted trace-norm corrects a specific situation where the standard
trace-norm fails, no general learning guarantees are provided, and it
is not clear if indeed the weighted trace-norm always leads to the
desired behavior.  The only theoretical analysis of the weighted
trace-norm that we are aware of is a recent report by Negahban and Wainwright
\citep{NW} that provides reconstruction guarantees for a
low-rank matrix with i.i.d.~noise, but only when the sampling
distribution is a {\em product distribution}, i.e.~the rows index and
column index of observed entries are selected independently.  A
product distribution assumption does not seem realistic in many
cases---e.g.~for the Netflix data, it would indicate that all users have the same
(conditional) distribution over which movies they rate.

In this paper we rigorously study learning with a weighted trace-norm
under an {\em arbitrary} sampling distribution, and show that this
situation is indeed more complicated, requiring a correction to the
weighting.  We show that this correction is necessary, and present
empirical results on the Netflix and MovieLens dataset indicating that
it is also helpful in practice.  We also rigorously consider weighting
according to either the true sampling distribution (as in
\citep{NW}) or the empirical frequencies, as is actually
done in practice, and present evidence that weighting by the empirical
frequencies might be advantageous.  Our setting is also more general
then that of~\citep{NW}---we consider an arbitrary loss
and do not rely in i.i.d.~noise, instead presenting results in an
agnostic learning framework.

\paragraph{Setup and Notation.} We consider an arbitrary unknown $n\times m$ target matrix $Y$, where
a subset of entries $\{Y_{i_t,j_t}\}_{t=1}^{s}$ indexed by
$S=\left\{(i_1,j_1),\dots,(i_s,j_s)\right\}$ is revealed to us. Without loss of generality, we assume $n\geq m$.
Throughout most of the paper, we assume $S$ is drawn i.i.d.~according
to some sampling distribution $p(i,j)$ (with replacement).  Based on
this subset on entries, we would like to fill in the missing entries
and obtain a prediction matrix $\hat{X}_S \in \R^{n \times m}$, with low
expected loss $L_p(\hat{X}_S)=\Ep{ij\sim p}{\ell((\hat{X}_S)_{ij},Y_{ij})}$,
where $\ell(x,y)$ is some loss function.  Note that
 we measure the loss with respect to the same distribution
$p(i,j)$ from which the training set is drawn (this is also the case
in~\citep{RusAndNatiNIPS,NW,S_SS_COLT}).

Given some distribution $p(i,j)$ on $[n]\times[m]$, the weighted
trace-norm of a matrix $X\in\R^{n\times m}$ is given by
\citep{RusAndNatiNIPS}
$$\wtrnorm{X}=\trnorm{\Diag{\pprow}^{\nicefrac{1}{2}} \cdot X \cdot
  \Diag{\ppcol}^{\nicefrac{1}{2}}}\;,$$ where $\pprow\in\R^n$ and $\ppcol\in\R^m$
denote vectors of the row- and column-marginals respectively.  Note
that the weighted trace-norm only depends on these marginals (but not
their joint distribution) and that if $\pprow$ and $\ppcol$ are
uniform, then $\wtrnorm{X}=\frac{1}{\sqrt{nm}}\trnorm{X}$. The
weighted trace-norm does not generally scale with $n$ and $m$, and in
particular, if $X$ has rank $r$ and entries bounded in $[-1,1]$, then
$\wtrnorm{X}\leq \sqrt{r}$ regardless of which $\pij$ is used.  This
motivates us to define the class
$$\Wcal = \{X\in \reals^{n\times m}:\wtrnorm{X}\leq \sqrt{r}\},$$
although we emphasize that our results do
not directly depend on the rank, and $\Wcal$ certainly includes
full-rank matrices.  We analyze here estimators of the form
$\hat{X}_S=\arg\min\{\hat{L}_S(X) \ : \ X\in \Wcal\}$ where
$\hat{L}_S(X)=\frac{1}{s}\sum_{t=1}^s\ell(X_{i_t,j_t},Y_{i_t,j_t})$
is the empirical error on the observed entries.

Although we focus mostly on the standard inductive setting, where the
samples are drawn i.i.d.~and the guarantee is on generalization for
future samples drawn by the same distribution, our results can also be
stated in a transductive model, where a training set and a test set
are created by splitting a fixed subset of entries uniformly at
random (as in \citep{S_SS_COLT}).  The transductive setting is
discussed in Section 4.2, and variants of our Theorems in this setting
are found there and in Appendix \ref{AppendixTransductive}.

\section{Learning with the Standard Weighting}

In this Section, we consider learning using the weighted trace-norm as
suggested by Salakhutdinov and Srebro~\citep{RusAndNatiNIPS}, i.e.~when the weighting
is according to the sampling distribution $p(i,j)$.  Following the
approach of~\citep{ShraibmanSrebro} and~\citep{RinaNatiCOLT}, we base
our results on bounding the Rademacher complexity of $\Wcal$, as a
class of functions mapping index pairs to entry values.  However, we
modify the analysis for the weighted trace-norm with non-uniform
sampling.

For a class of matrices $\Xcal$ and a sample
$S=\left\{(i_1,j_1),\dots,(i_s,j_s)\right\}$ of indexes in $[n]\times
[m]$, the empirical Rademacher complexity of the class (with respect
to $S$) is given by
$$\hat{\Rcal}_S(\Xcal)=\Ep{\sigma\sim\{\pm1\}^s}{\sup_{X\in\Xcal}\frac{1}{s}\sum_{t=1}^s \sigma_t X_{i_tj_t}}\;,$$
where $\sigma$ is a vector of signs drawn uniformly at random.
Intuitively, $\hat{\Rcal}_S(\Xcal)$ measures the extent to which the
class $\Xcal$ can ``overfit'' data, by finding a matrix $X$ which
correlates as strongly as possible to a sample from a matrix of random noise.  For a
loss $\ell(x,y)$ that is Lipschitz in $x$, the Rademacher complexity
can be used to uniformly bound the deviations $|L_p(X)-\hat{L}_S(X)|$
for all $X\in\Xcal$, yielding a learning guarantee on the empirical
risk minimizer~\citep{BartlettMendelson}.

\subsection{Guarantees for Special Sampling Distributions}

We begin by providing guarantees for an arbitrary, possibly unbounded,
Lipschitz loss $\ell(x,y)$, but only under sampling distributions
which are {\em either} product distributions (i.e.~$p(i,j)=p^r(i)
p^c(j)$) {\em or} have uniform marginals (i.e.~$p^r$ and $p^c$ are
uniform, but perhaps the rows and columns are not independent).  In
Section~\ref{SmoothingExamples} below, we will see why this severe
restriction on $\pp$ is needed.

\begin{theorem}\label{SquareRootThm}
  For an $l$-Lipschitz loss $\ell$, fix any matrix $Y$, sample size
  $s$, and distribution $\pp$, such that $\pp$ is either a product
  distribution or has uniform marginals.

  Let
  $\hat{X}_S=\arg\min\left\{\hat{L}_S(X) \ : \
   X\in\Wcal\right\}$. Then, in expectation over
  the training sample $S$ drawn i.i.d. from the distribution $\pp$,
\begin{equation}\label{SquareRootEq}L_p(\hat{X}_S)\leq
  \inf_{X\in\Wcal}L_p(X)+\mathbf{O}\left(l \cdot \sqrt{\frac{rn\log(n)}{s}}\right)\;.\end{equation}
\end{theorem}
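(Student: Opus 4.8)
The plan is to prove the generalization bound via the standard Rademacher-complexity route sketched in the paper: since $\ell$ is $l$-Lipschitz in its first argument, a uniform deviation bound gives
\[
\EE{L_p(\hat{X}_S)} \leq \inf_{X\in\Wcal}L_p(X) + 2l\cdot\EE{\hat{\Rcal}_S(\Wcal)},
\]
so the entire task reduces to showing that the expected empirical Rademacher complexity of $\Wcal = \{X : \wtrnorm{X}\leq\sqrt{r}\}$ is $\mathbf{O}\!\left(\sqrt{rn\log(n)/s}\right)$. First I would rewrite the Rademacher average in terms of the spectral norm of a random weighted noise matrix. Because the weighted trace-norm is the trace-norm after rescaling rows by $\Diag{\pprow}^{1/2}$ and columns by $\Diag{\ppcol}^{1/2}$, the constraint $\wtrnorm{X}\leq\sqrt{r}$ is a trace-norm ball constraint on the rescaled matrix $\tilde X = \Diag{\pprow}^{1/2} X \Diag{\ppcol}^{1/2}$. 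Using trace-norm/spectral-norm duality, $\sup_{X\in\Wcal}\frac1s\sum_t \sigma_t X_{i_tj_t}$ becomes $\frac{\sqrt r}{s}\,\spnorm{M_S}$, where $M_S$ is the random matrix whose $(i,j)$ entry is $\sum_{t}\sigma_t\,\indicator{(i_t,j_t)=(i,j)}\big/\sqrt{\prowi\,\pcolj}$ — i.e.\ at each observed location we place a random sign, reweighted by the inverse square-root of the corresponding row and column marginals.

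The crux is therefore to bound $\EE{\spnorm{M_S}}$, where the expectation is over both the signs $\sigma$ and the sampled locations $S$. The key step is a spectral-norm concentration argument for the sum of independent random matrices $M_S = \sum_{t=1}^s Z_t$, with $Z_t = \sigma_t\, e_{i_t} e_{j_t}^\top / \sqrt{p^r(i_t)p^c(j_t)}$. Since each $\EE{Z_t}=0$ (the signs are mean zero), I would apply a matrix concentration bound — a noncommutative Bernstein / Khintchine inequality or the Seginer/Latała-type bounds for the spectral norm of a random matrix with independent entries — to control $\spnorm{M_S}$ in terms of the largest expected row and column variances. The variance quantities are $\max_i \EE{\sum_t \indicator{i_t=i}/(p^r(i)p^c(j_t))}$ and its column analogue. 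Here is exactly where the hypothesis on $\pp$ enters and is essential: if $\pp$ is a \emph{product} distribution, then $p(i,j)=\prowi\pcolj$ and the row-variance telescopes to $\max_i \frac{s}{p^r(i)}\sum_j \frac{\prowi\pcolj}{p^c(j)} = sm$, and symmetrically the column-variance is $sn$; if instead $\pp$ has \emph{uniform marginals}, then $\prowi=1/n$ and $\pcolj=1/m$ are constant, so the reweighting is a uniform scaling and the same clean variance bounds $\mathbf{O}(sn)$ (using $n\geq m$) drop out directly. In either case the dominant variance term is $\mathbf{O}(sn)$, giving $\EE{\spnorm{M_S}} = \mathbf{O}(\sqrt{sn\log n})$ after accounting for the $\log n$ factor from the matrix Bernstein dimension dependence.

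Assembling the pieces, $\EE{\hat{\Rcal}_S(\Wcal)} = \frac{\sqrt r}{s}\,\EE{\spnorm{M_S}} = \mathbf{O}\!\left(\frac{\sqrt r}{s}\sqrt{sn\log n}\right) = \mathbf{O}\!\left(\sqrt{rn\log(n)/s}\right)$, which combined with the Lipschitz contraction and symmetrization bound yields \eqref{SquareRootEq}. The main obstacle I anticipate is the spectral-norm concentration step: it requires carefully bounding both the variance parameter and the almost-sure bound $\max_t \spnorm{Z_t} = \max_{i,j}1/\sqrt{\prowi\pcolj}$ appearing in the Bernstein inequality, and ensuring the latter does not degrade the rate. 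I expect the bounded-entry normalization (entries in $[-1,1]$, and ideally a bound on the smallest marginal) or a truncation/peeling argument to be needed to keep the sub-exponential tail term subdominant. The restriction to product or uniform-marginal distributions is precisely what makes the variance bound controllable; for general $\pp$ the off-product correlations inflate the row/column variances, which is the phenomenon motivating the corrected weighting developed later in the paper.
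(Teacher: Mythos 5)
Your overall route --- symmetrization plus Lipschitz contraction to reduce everything to $\Ep{S}{\hat{\Rcal}_S(\Wcal)}$, trace-norm/spectral-norm duality to rewrite this as $\frac{\sqrt{r}}{s}\Ep{S,\sigma}{\spnorm{\sum_t Q_t}}$ with $Q_t = \sigma_t\, e_{i_t}e_{j_t}^T/\sqrt{\prow{i_t}\pcol{j_t}}$, and a matrix Bernstein (Tropp-type) bound with variance parameter $\sigma^2 = s\max\left\{\max_i\sum_j \frac{\pij}{\prowi\pcolj},\, \max_j\sum_i\frac{\pij}{\prowi\pcolj}\right\}$ --- is exactly the paper's, and your variance computations are correct: the row/column sums collapse to $m$ and $n$ both in the product case and under uniform marginals, giving $\sigma^2 \leq sn$. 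Under uniform marginals the almost-sure bound $R = \max_{ij}(\prowi\pcolj)^{-\nicefrac{1}{2}} = \sqrt{nm}$ also keeps the $R\log(n)$ term subdominant once $s \gtrsim n\log(n)$ (the small-sample regime being trivial), so that case of your argument is complete.

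The genuine gap is the product-distribution case, which you flag but do not resolve. There $R$ is truly uncontrolled --- marginals can be arbitrarily small --- and neither of your suggested fixes works as stated. Entrywise boundedness is not available: Theorem~\ref{SquareRootThm} allows unbounded Lipschitz losses, and $\Wcal$ contains matrices with enormous entries at low-probability cells (Example~2 in the paper is built on exactly this). A truncation is indeed what the paper does, but it is not a local patch inside the Bernstein inequality: the paper truncates the \emph{hypothesis class}, replacing each $X\in\Wcal$ by $Z(X)$ whose entries are zeroed wherever $\pij < \frac{\log(n)}{s\sqrt{nm}}$; note that for a product distribution this threshold on $\pij = \prowi\pcolj$ directly yields $R \leq \sqrt{\frac{s\sqrt{nm}}{\log(n)}}$, which is where the product structure enters a second time, beyond the variance. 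Because the ERM is taken over $\Wcal$ rather than the truncated class, one must then separately control $\left|L_p(X)-L_p(Z(X))\right|$ and the empirical analogues $\hat{L}_S(Z(X))-\hat{L}_S(X)$, both for $\hat{X}_S$ and for the comparator $X^*$. The paper does this by bounding the truncated mass $\sum_{ij}\pij\left|X_{ij}\right|\indicator{\pij < \frac{\log(n)}{s\sqrt{nm}}} \leq \sqrt{\frac{r\sqrt{nm}\log(n)}{s}}$ via Cauchy--Schwarz using $\wfrnorm{X}\leq\wtrnorm{X}\leq\sqrt{r}$, and then reassembling the excess risk from a seven-term decomposition. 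That argument constitutes the bulk of the paper's proof for the product case; without it (or an equivalent) your proposal does not deliver the stated $\sqrt{rn\log(n)/s}$ rate for product distributions.
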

Here and elsewhere we state learning guarantees in expectation for
simplicity, but all guarantees can also be obtained with high probability.

\begin{proof} We will show how to bound the expected
  Rademacher complexity $\Ep{S}{\hat{\Rcal}_S(\Wcal)}$, from which the
  desired results follows using standard arguments \citep{BartlettMendelson}.

  Following \citep{RinaNatiCOLT} by including the weights, using the duality between spectral norm
  $\spnorm{\cdot}$ and trace-norm, we compute:
  \begin{align*}
   & \Ep{S}{\empRad_S(\Wcal) }=
\frac{\sqrt{r}}{s} \Ep{S,\sigma}{\spnorm{
      \sum_{t=1}^s \sigma_t \frac{e_{i_t,j_t}}{\sqrt{\prow{i_t}\pcol{j_t}}}}} = \frac{\sqrt{r}}{s} \Ep{S,\sigma}{\spnorm{
      \sum_{t=1}^s Q_t}}\;,
    \end{align*}
where $e_{i,j}=e_ie_j^T$ and $Q_t = \sigma_t \frac{e_{i_t,j_t}}{\sqrt{\prow{i_t}\pcol{j_t}}}
\in \R^{n \times m}$. Since the $Q_t$'s are i.i.d. zero-mean matrices, Theorem 6.1 of~\citep{TroppTailBounds}, combined with Remarks 6.4 and
6.5 there, establishes that
\begin{equation*}
  \Ep{S,\sigma}{\spnorm{\sum_{t=1}^s Q_t}} = \mathbf{O}\left( \sigma \sqrt{\log(n)} + R \log(n) \right)\;,
\end{equation*}
where $\spnorm{Q_t}\leq R$ (almost surely) and $\sigma^2 =
\max\left\{\spnorm{\sum\EE{Q_t^T Q_t}},\spnorm{\sum\EE{Q_t
      Q^T_t}} \right\}$.  Calculating these (see  Appendix~\ref{AppendixProofs}
), we get $R\leq\sqrt{\frac{nm}{\min_{i,j}\{n\prowi\cdot m\pcolj\}}}$, and
      $$\sigma\leq  \sqrt{s\max\left\{\max_i\sum_j\frac{\pij}{\prowi\pcolj},\max_j\sum_i\frac{\pij}{\prowi\pcolj}\right\}}\leq \sqrt{\frac{sn}{\min_{i,j}\{n\prowi\cdot m\pcolj\}}}\;.$$

If $\pp$ has uniform row- and column-marginals, then for all $i,j$, $n\prowi=m\pcolj=1$. This yields
$$\Ep{S}{\hat{\Rcal}(\Wcal)}\leq \mathbf{O}\left( \sqrt{\frac{rn\log(n)}{s}}\right)\;,$$
as desired. (Here we assume $s>n\log(n)$, since otherwise we need
only establish that excess error is $O(l \sqrt{r})$, which holds
trivially for any matrix in $\Wcal$.)

If $\pp$ does not have uniform marginals, but instead is a product distribution, then the quantity $R$ defined above is potentially unbounded, so we cannot apply the same simple argument. However, we can consider the ``$\pp$-truncated'' class of matrices
$$\Zcal=\left\{Z(X)=\left(X_{ij}\indicator{\pij\geq\frac{\log(n)}{s\sqrt{nm}}}\right)_{ij}:X\in\Wcal\right\}\;.$$
By a similar calculation of the expected spectral norms, we can now bound
$\Ep{S}{\hat{\Rcal}_S(\Zcal)}\leq \mathbf{O}\left( \sqrt{\frac{rn\log(n)}{s}}\right)$.
Applying~\citep{BartlettMendelson}, this bounds $\left(L_p(Z(\hat{X}_S))-\hat{L}_S(Z(\hat{X}_S))\right)$ (in expectation). Since $Z(\hat{X}_S)_{ij}\neq(\hat{X}_S)_{ij}$ only on the extremely low-probability entries, we can also bound $\left(L_p(\hat{X}_S)-L_p(Z(\hat{X}_S))\right)$ and $\left(\hat{L}_S(Z(\hat{X}_S))-\hat{L}_S(\hat{X}_S)\right)$. Combining these steps, we can bound $\left(L_p(\hat{X}_S)-\hat{L}_S(\hat{X}_S)\right)$. We similarly  bound $\hat{L}_S(X^*)-L_p(X^*)$, where $X^*=\arg\min_{X\in\Wcal}L_p(X)$. Since $\hat{L}_S(\hat{X}_S)\leq\hat{L}_S(X^*)$, this yields the desired bound on excess error. The details are given in  Appendix~\ref{AppendixProofs}.
\end{proof}

Examining the proof of Theorem~\ref{SquareRootThm}, we see that we can
generalize the result by including distributions $\pp$ with row- and
column-marginals that are lower-bounded. More precisely, if $\pp$
satisfies $\prowi\geq\frac{1}{Cn}$, $\pcolj\geq\frac{1}{Cm}$ for all
$i,j$, then the bound~(\ref{SquareRootEq}) holds, up to a factor of
$C$. Note that this result does not require an upper bound on the row-
and column-marginals, only a lower bound, i.e.~it only requires that
no marginals are too low.  This is important to note since the
examples where the unweighted trace-norm fails under a non-uniform
distribution are situations where some marginals are very {\em high}
(but none are too low) ~\citep{RusAndNatiNIPS}.  This suggests that
the low-probability marginals could perhaps be ``smoothed'' to satisfy
a lower bound, without removing the advantages of the weighted
trace-norm. We will exploit this in Section~\ref{SmoothingSection} to
give a guarantee that holds more generally for arbitrary $\pp$, when
smoothing is applied.

\subsection{Guarantees for bounded loss}

In Theorem~\ref{SquareRootThm}, we showed a strong bound on excess
error, but only for a restricted class of distributions $\pp$. We now
show that if the loss function $\ell$ is bounded, then we can give a
non-trivial, but weaker, learning guarantee that holds uniformly over
all distributions $\pp$.  Since we are in any case discussing
Lipschitz loss functions, requiring that the loss function be bounded
essentially amounts to requiring that the entries of the matrices
involved be bounded.  That is, we can view this as a guarantee on
learning matrices with bounded entries.  In
Section~\ref{SmoothingExamples} below, we will show that this
boundedness assumption is unavoidable if we want to give a guarantee
that holds for arbitrary $\pp$.

\begin{theorem}\label{CubeRootThm}
  For an $l$-Lipschitz loss $\ell$ bounded by $b$, fix any matrix $Y$,
  sample size $s$, and any distribution $\pp$.  Let
  $\hat{X}_S=\arg\min\left\{\hat{L}_S(X) \ : \ X\in\Wcal\right\}$ for $r\geq 1$. Then, in expectation over the
  training sample $S$ drawn i.i.d. from the distribution $\pp$,
\begin{equation}\label{CubeRootEq}L_p(\hat{X}_S)\leq
  \inf_{X\in\Wcal}L_p(X)+\mathbf{O}\left( (l+b)\cdot \sqrt[3]{\frac{rn\log(n)}{s}}\right)\;.\end{equation}
\end{theorem}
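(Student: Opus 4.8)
The plan is to reduce the general-distribution case to the special-distribution result of Theorem~\ref{SquareRootThm} by a truncation/splitting argument. The obstacle in the arbitrary-$\pp$ setting is that row- or column-marginals may be arbitrarily small, so the quantity $R$ in the spectral-norm tail bound blows up and the clean $\sqrt{rn\log(n)/s}$ rate is unavailable. Let me think about how the boundedness of $\ell$ buys us a weaker but universal guarantee.

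Let me work out the threshold argument carefully.

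First, I would split the index set $\nm$ into a ``heavy'' part and a ``light'' part according to whether the product $n\prowi \cdot m\pcolj$ (equivalently $\prowi \pcolj$ relative to uniform) exceeds some threshold $\tau$, to be chosen later. On the heavy part, the marginals are lower-bounded, so the argument behind Theorem~\ref{SquareRootThm} (in its generalized, lower-bounded-marginals form noted right after the proof) applies and contributes a Rademacher term of order $\sqrt{rn\log(n)/s}$ inflated by the factor $1/\tau$ coming from the marginal lower bound. On the light part, I cannot control the Rademacher complexity, but here I will exploit the bounded loss: the total sampling probability mass on light entries is small, so with high probability the training sample contains few or no light entries, and on the test side the expected loss contributed by light entries is at most $b$ times their total mass. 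The trade-off is between the complexity penalty $\frac{1}{\tau}\sqrt{rn\log(n)/s}$ on the heavy part and the mass penalty (of order $b$ times light mass) on the light part.

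The key step is bounding the total $\pp$-mass of the light entries. I expect this to be controlled by a counting argument: there are at most $nm$ entries, each light entry has $\pij \leq \prowi \pcolj$ is not directly available, so instead I would bound the mass by noting that an entry is light when at least one marginal is small, and summing $\prowi$ over rows with small marginal (and symmetrically for columns) gives at most order $\tau \cdot (\text{number of such indices})$, i.e.\ a bound proportional to $\tau$ after accounting for the $n,m$ normalization. Optimizing $\tau$ to balance the two error contributions—the $\frac{1}{\tau}$-inflated square-root complexity term against the $\tau$-proportional (times $b$) mass term—is exactly what turns the $\sqrt[2]{\cdot}$ rate into the $\sqrt[3]{\cdot}$ rate, since balancing $\tau^{-1}\sqrt{A}$ against $\tau A^{0}$-type quantities with $A = rn\log(n)/s$ yields an optimal $\tau \sim A^{1/3}$ and a combined error of order $A^{1/3}$.

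The main obstacle, and where I would spend the most care, is making the light-part handling fully rigorous while keeping both the empirical and population losses controlled simultaneously. The estimator $\hat{X}_S$ is chosen to minimize empirical loss over all of $\Wcal$, but my complexity bound only controls the heavy part; I therefore need the truncation device from the proof of Theorem~\ref{SquareRootThm} (zeroing out entries below the threshold) applied here as well, and I must separately bound $L_p(\hat{X}_S) - L_p(Z(\hat{X}_S))$ and $\hat{L}_S(\hat{X}_S) - \hat{L}_S(Z(\hat{X}_S))$ using the bounded loss $b$ together with the small light-mass estimate. Once those two gaps, the heavy-part Rademacher term, and the comparison to $X^* = \arg\min_{X\in\Wcal} L_p(X)$ are assembled via $\hat{L}_S(\hat{X}_S) \leq \hat{L}_S(X^*)$, the optimization over $\tau$ delivers the cube-root bound~(\ref{CubeRootEq}), with the $(l+b)$ prefactor reflecting the Lipschitz term from the heavy part and the boundedness term from the light part. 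The detailed constants I would defer to an appendix.
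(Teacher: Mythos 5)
Your skeleton is the same as the paper's: threshold the marginals, charge the ``light'' part to the bounded loss through the counting bound $\sum_{i:\prowi<\theta}\prowi\leq n\theta$, handle the ``heavy'' part with the truncated matrix-concentration argument behind Theorem~\ref{SquareRootThm}, and optimize the threshold. The genuine gap is in the balancing step, and it is not cosmetic: with the heavy-part inflation you actually have available, your argument proves only a fourth-root rate, not the cube root claimed in~(\ref{CubeRootEq}). Concretely, the ``up to a factor of $C$'' remark you invoke rests on $\sigma\leq\sqrt{sn/\min_{ij}\{n\prowi\cdot m\pcolj\}}$, which inflates the dominant Rademacher term by the full inverse threshold; balancing $\tau^{-1}\sqrt{A}$ against a light-mass penalty of order $b\tau$ (with $A=rn\log(n)/s$) gives $\tau\sim(\sqrt{A}/b)^{1/2}$ and a combined error of order $\sqrt{b}\,A^{1/4}$ --- your assertion that this balance ``yields an optimal $\tau\sim A^{1/3}$ and a combined error of order $A^{1/3}$'' is arithmetically incorrect, and $A^{1/4}$ exceeds $A^{1/3}$ in the regime $A<1$, so the theorem does not follow. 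The idea you are missing, which is the crux of the paper's proof, is that the matrix-Bernstein variance needs a lower bound on only \emph{one} marginal at a time: with individual thresholds $\prowi,\pcolj\geq\theta$ on the retained entries,
$$\sum_j\frac{\pij\indicator{\prowi,\pcolj\geq\theta}}{\prowi\pcolj}\leq\frac{1}{\theta}\sum_j\frac{\pij}{\prowi}=\frac{1}{\theta}\;,$$
so $\sigma^2\leq s/\theta$ and the dominant term is inflated only by $\theta^{-\nicefrac{1}{2}}$; the full $\theta^{-1}$ inflation hits only $R$, whose contribution $R\log(n)/s$ is second order (it produces the paper's $\left(\sqrt[3]{rn\log(n)/s}\right)^2$ term, negligible once $s\geq rn\log(n)$, the statement being trivial otherwise since the loss is $b$-bounded). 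Balancing $l\sqrt{r\log(n)/(s\theta)}$ against $bn\theta$ then gives exactly the paper's threshold $\theta=\sqrt[3]{l^2r\log(n)/(b^2sn^2)}$ and the rate $l^{\nicefrac{2}{3}}b^{\nicefrac{1}{3}}\sqrt[3]{rn\log(n)/s}\leq(l+b)\sqrt[3]{rn\log(n)/s}$.

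A related bookkeeping slip compounds this: you threshold the product $n\prowi\cdot m\pcolj\geq\tau$ but then bound the light mass as if each marginal were thresholded individually. A light product only implies $\min\{n\prowi,m\pcolj\}<\sqrt{\tau}$, so under a product threshold the light mass is $O(\sqrt{\tau})$ rather than $O(\tau)$, and moreover the one-sided variance trick above does not go through cleanly (bounding $1/\pcolj$ via the product constraint reintroduces $\prowi$ with the wrong sign). Thresholding $\prowi$ and $\pcolj$ separately, as the paper does, removes both problems. Your structural assembly, by contrast, is fine: truncating via $Z(X)$ and bounding $L_p(X)-L_p(Z(X))$ and its empirical analogue by $b$ times the light mass is essentially equivalent to the paper's route, which instead splits the sample indices into $T^0_S$ (charged $b\left|T^0_S\right|/s$, with expectation controlled by the light mass) and $T^1_S$ (handled by a Lipschitz comparison to $\ell(0,Y_{i_tj_t})$, the step $\trnorm{\mathrm{abs}(X)}\leq\trnorm{X}$, and the truncated spectral-norm bound); either assembly works once the variance computation is sharpened as above.
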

The proof is provided in  Appendix~\ref{AppendixProofs}, and is again
based on analyzing the expected Rademacher complexity,
$\Ep{S}{\hat{\Rcal}(\ell\circ\Wcal)}\leq \mathbf{O}\left((l+b)\cdot \sqrt[3]{\frac{rn\log(n)}{s}}\right)$.

\subsection{Problems with the standard weighting}\label{SmoothingExamples}
In the previous Sections, we showed that for distributions $\pp$ that are either product distributions or have uniform marginals, we can prove a square-root bound on excess error, as shown in~(\ref{SquareRootEq}). For arbitrary $\pp$, the only learning guarantee we obtain is a cube-root bound given in~(\ref{CubeRootEq}), for the special case of bounded loss. We would like to know whether the square-root bound might hold uniformly over all distributions $\pp$, and if not, whether the cube-root bound is the strongest result that we can give in this case for the bounded-loss setting, and whether any bound will hold uniformly over all $\pp$ in the unbounded-loss setting.

The examples below demonstrate that we cannot improve the results of
Theorems~\ref{SquareRootThm} and~\ref{CubeRootThm} (up to log
factors), by constructing degenerate examples using non-product
distributions $\pp$ with non-uniform marginals. Specifically, in
Example~1, we show that in the special
case of bounded loss, the cube-root bound in~\ref{CubeRootEq} is the
best possible bound (up to the log factor) that will hold for all
$\pp$, by giving a construction for arbitrary $n=m$ and arbitrary
$s\leq nm$, such that with $1$-bounded loss, excess error is
${\mathbf{\Omega}}\left(\sqrt[3]{\frac{n}{s}}\right)$. In
Example~2, we show that with unbounded
(Lipschitz) loss, we cannot bound excess error better than a constant
bound, by giving a construction for arbitrary $n=m$ and arbitrary
$s\leq nm$ in the unbounded-loss regime, where excess error is
$\mathbf{\Omega}(1)$. For both examples we fix $r=1$. We note that both examples can be modified to fit the
transductive setting, demonstrating that smoothing is necessary also in
the transductive setting as well.

{\bf Example 1.}
Let $\ell(x,y)=\min\{1,|x-y|\}\leq 1$, let $a=(2s/n)^{\nicefrac{2}{3}}<n$, and let matrix $Y$ and block-wise constant distribution $\pp$ be given by
$$Y=\left(\begin{array}{cc}A&\mathbf{0}_{a\times\tfrac{n}{2}}\\\mathbf{0}_{(n-a)\times\tfrac{n}{2}}&\mathbf{0}_{(n-a)\times\tfrac{n}{2}}\\\end{array}\right)\;,  \ \ (\pij)=\left(\begin{array}{cc}\frac{1}{2s}\cdot\mathbf{1}_{a\times \tfrac{n}{2}}&\mathbf{0}_{a\times\tfrac{n}{2}}\\\mathbf{0}_{(n-a)\times\tfrac{n}{2}}&\frac{1-\tfrac{an}{4s}}{(n-a)\frac{n}{2}}\cdot\mathbf{1}_{(n-a)\times\tfrac{n}{2}}\\\end{array}\right)\;,$$
where $A\in\{\pm 1\}^{a\times\tfrac{n}{2}}$ is any sign matrix.
Clearly, $\wtrnorm{Y}\leq 1$, and so $\inf_{X\in\Wcal}L_p(X)=0$.
Now suppose we draw a sample $S$ of size $s$ from the matrix $Y$,
according to the distribution $\pp$.  We will show an ERM $\hat{Y}$
such that in expectation over $S$, $L_p(\hat{Y})\geq
\frac{1}{8}\sqrt[3]{\frac{n}{s}}$.

Consider $Y^S$ where $Y^S_{ij}=Y_{ij}\indicator{ij\in S}$, and note
that $\wtrnorm{Y^S}\leq 1$. Since $\hat{L}_S(Y^S)=0$, it
clearly an ERM.  We also have $L_p(Y^S)=\frac{N}{2s}$, where $N$ is
the number of $\pm1$'s in $Y$ which are not observed in the sample.
Since $\EE{N}\geq\frac{an}{4}$, we see that $\EE{L_p(Y^S)}\geq
\frac{1}{2s}\cdot \frac{an}{4}\geq\frac{1}{8}\sqrt[3]{\frac{n}{s}}$.

{\bf Example 2.} Let $\ell(x,y)=|x-y|$. Let $Y=\mathbf{0}_{n\times n}$; trivially, $Y\in\Wcal$. Let $\p{1}{1}=\frac{1}{s}$, and $\p{i}{1}=\p{1}{j}=0$ for all $i,j>1$, yielding $\prow{1}=\pcol{1}=\frac{1}{s}$. (The other entries of $\pp$ may be defined arbitrarily.) We will show an ERM $\hat{Y}$ such that, in expectation over $S$, $L_p(\hat{Y})\geq 0.25$. Let $A$  be the matrix with $X_{11}=s$ and zeros elsewhere, and note that $\wtrnorm{A}=1$. With probability $\geq 0.25$, entry $(1,1)$ will not appear in $S$, in which case $\hat{Y}=A$ is an ERM, with $L_p(\hat{Y})=1$.
\vspace{.3cm}

The following table summarizes the learning
guarantees that can be established for the (standard) weighted
trace-norm.  As we saw, these guarantees are tight up to log-factors.

\begin{tabular}{|c|c|c|}\hline
&$1$-Lipschitz, $1$-bounded loss&$1$-Lipschitz, unbounded loss\\\hline
$\pp=$ product&$ \sqrt{\frac{rn\log(n)}{s}}$&$ \sqrt{\frac{rn\log(n)}{s}}$\\\hline
$\pprow,\ppcol=$ uniform&$ \sqrt{\frac{rn\log(n)}{s}}$&$ \sqrt{\frac{rn\log(n)}{s}}$\\\hline
$\pp$ arbitrary&$ \sqrt[3]{\frac{rn\log(n)}{s}}$&1\\\hline
\end{tabular}

\section{Smoothing the weighted trace norm}\label{SmoothingSection}

Considering Theorem~\ref{SquareRootThm} and the degenerate examples in
Section~\ref{SmoothingExamples}, it seems that in order to be able to
generalize for non-product distributions, we need to enforce some sort
of uniformity on the weights.  The Rademacher complexity computations
in the proof of Theorem~\ref{SquareRootThm} show that the problem lies
not with large entries in the vectors $\pprow$ and $\ppcol$ (i.e. if
$\pprow$ and/or $\ppcol$ are ``spiky''), but with the small entries in
these vectors. This suggests the possibility of ``smoothing'' any
overly low row- or column-marginals, in order to improve learning
guarantees.

In Section~\ref{SmoothingSec1}, we present such a smoothing, and
provide guarantees for learning with a smoothed weighted
trace-norm.  The result suggests that there is no strong negative
consequence to smoothing, but there might be a large advantage, if
confronted with situations as in Examples~1 and~2.  In
Section \ref{SmoothingSec2} we check the smoothing correction to the
weighted trace-norm on real data, and observe that indeed it can also
be beneficial in practice.

\subsection{Learning guarantee for arbitrary distributions}\label{SmoothingSec1}

Fix a distribution $\pp$ and a constant $\alpha\in(0,1)$, and let $\tpp$ denote the smoothed marginals:
\begin{equation}\label{Smoothing_alpha}\tprowi=\alpha\cdot\prowi+(1-\alpha)\cdot\tfrac{1}{n}, \ \  \ \tpcolj=\alpha\cdot\pcolj+(1-\alpha)\cdot\tfrac{1}{m}\;.\end{equation}
In the theoretical results below, we use $\alpha=\frac{1}{2}$, but up to a constant factor, the same results hold for any fixed choice of $\alpha\in(0,1)$.

\begin{theorem}\label{SmoothingThm}
 For an $l$-Lipschitz loss $\ell$, fix any matrix $Y$,
  sample size $s$, and any distribution $\pp$.
  Let $\hat{X}_S=\arg\min\left\{\hat{L}_S(X) \ : \ X\in\tWcal\right\}$. Then, in expectation over the training sample $S$ drawn i.i.d. from the distribution $\pp$,
\begin{equation}\label{SquareRootEqSmooth}L_p(\hat{X}_S)\leq \inf_{X\in\tWcal}L_p(X)+\mathbf{O}\left(l\cdot \sqrt{\frac{rn\log(n)}{s}}\right)\;.\end{equation}
\end{theorem}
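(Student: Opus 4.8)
The plan is to follow the proof of Theorem~\ref{SquareRootThm} almost verbatim, but with the marginals $\pprow,\ppcol$ appearing in the weighting replaced by the smoothed marginals $\tpprow,\tppcol$, while keeping the sampling distribution (and hence all expectations) governed by the original $\pp$. First I would invoke the standard symmetrization and Lipschitz-contraction machinery of~\citep{BartlettMendelson} to reduce the excess-error bound to a bound on the expected Rademacher complexity $\Ep{S}{\empRad_S(\tWcal)}$, exactly as in Theorem~\ref{SquareRootThm}. Then, using trace-/spectral-norm duality as in~\citep{RinaNatiCOLT}, I would write
\begin{equation*}
\Ep{S}{\empRad_S(\tWcal)} = \frac{\sqrt{r}}{s}\,\Ep{S,\sigma}{\spnorm{\sum_{t=1}^s Q_t}}, \qquad Q_t = \sigma_t\,\frac{e_{i_t,j_t}}{\sqrt{\tprow{i_t}\tpcol{j_t}}}\,.
\end{equation*}
The essential point --- and the source of the mismatch that must be handled carefully --- is that the indices $(i_t,j_t)$ are still drawn from the arbitrary distribution $\pp$, whereas the denominators now carry the smoothed weights $\tpprow,\tppcol$.

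Next I would apply Tropp's tail bound (Theorem~6.1 together with Remarks~6.4 and~6.5 of~\citep{TroppTailBounds}), which requires an almost-sure bound $R$ on $\spnorm{Q_t}$ and a variance parameter $\sigma^2 = \max\{\spnorm{\sum\EE{Q_t^TQ_t}},\spnorm{\sum\EE{Q_tQ_t^T}}\}$. Here smoothing does all the work. Because $\tprowi\geq\tfrac{1}{2}\cdot\tfrac1n$ and $\tpcolj\geq\tfrac12\cdot\tfrac1m$, the quantity $\spnorm{Q_t}=1/\sqrt{\tprow{i_t}\tpcol{j_t}}$ is bounded by $R\leq 2\sqrt{nm}$ with no truncation --- in contrast to the arbitrary-$\pp$ case of Theorem~\ref{SquareRootThm}, where $R$ was unbounded and forced the weaker cube-root route. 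For the variance, the matrices $\EE{Q_tQ_t^T}$ and $\EE{Q_t^TQ_t}$ are diagonal, and the key estimate is that summing the sampling probabilities against the smoothed denominators stays controlled for every row:
\begin{equation*}
\sum_j \frac{\pij}{\tprowi\,\tpcolj} \;\leq\; \frac{2m}{\tprowi}\sum_j \pij \;=\; \frac{2m\,\prowi}{\tprowi} \;\leq\; 4m \;\leq\; 4n\,,
\end{equation*}
where I used $\tpcolj\geq\tfrac1{2m}$ for the first inequality and $\tprowi\geq\tfrac12\prowi$ for the last. The symmetric computation bounds the column sums by $4n$, so $\sigma^2\leq 4sn$.

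Finally I would substitute $R\leq 2\sqrt{nm}$ and $\sigma\leq 2\sqrt{sn}$ into Tropp's bound to obtain $\Ep{S,\sigma}{\spnorm{\sum_t Q_t}}=\mathbf{O}(\sqrt{sn\log n}+\sqrt{nm}\log n)$, hence $\Ep{S}{\empRad_S(\tWcal)}=\mathbf{O}(\sqrt{r}(\sqrt{n\log(n)/s}+\sqrt{nm}\log(n)/s))$; assuming $s>n\log(n)$ (otherwise the bound~(\ref{SquareRootEqSmooth}) holds trivially since every $X\in\tWcal$ has excess error $\mathbf{O}(l\sqrt r)$), the second term is dominated by the first and the claimed rate follows. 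I expect the main obstacle to be exactly the sampling/weighting mismatch in the variance term: for arbitrary $\pp$ a single column with tiny $\pcolj$ but non-negligible mass $\sum_i\pij$ would blow up $\sum_i \pij/(\prowi\pcolj)$, which is precisely why Theorem~\ref{SquareRootThm} fails for general $\pp$. Smoothing neutralizes this because it simultaneously caps $1/\tpcolj$ at $2m$ and keeps $\prowi/\tprowi\leq 2$, so the offending $\prowi$ in the numerator cancels; verifying that these two effects together suffice, uniformly over all $\pp$, is the one place where genuine care rather than bookkeeping is needed.
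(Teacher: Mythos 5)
Your proposal is correct and matches the paper's proof essentially verbatim: the same modified $Q_t = \sigma_t\,e_{i_t,j_t}/\sqrt{\tprow{i_t}\tpcol{j_t}}$, the same bound $R\leq 2\sqrt{nm}$ from the smoothing floor, and the same variance estimate $\sigma^2\leq 4sn$ obtained by pairing $\tpcolj\geq\tfrac{1}{2m}$ with $\tprowi\geq\tfrac{1}{2}\prowi$ so the numerator's $\prowi$ cancels, followed by Tropp's bound and the reduction via~\citep{BartlettMendelson}. The only cosmetic difference is that you spell out the small-$s$ trivial case and the domination of the $R\log(n)$ term, which the paper leaves implicit by reference to the proof of Theorem~\ref{SquareRootThm}.
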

\begin{proof}
We bound $\Ep{S\sim p}{\hat{\Rcal}_S(\tWcal)}\leq \mathbf{O}\left(\sqrt{\frac{rn\log(n)}{s}}\right)$, and then apply~\citep{BartlettMendelson}. The proof of this Rademacher bound is essentially identical to the proof in Theorem~\ref{SquareRootThm}, with the modified definition of $Q_t = \sigma_t \frac{e_{i_t,j_t}}{\sqrt{\tprowi\tpcolj}}$.  Then $\spnorm{Q_t}\leq\max_{ij}\frac{1}{\sqrt{\tprowi\tpcolj}}\leq 2\sqrt{nm}\doteq R$, and
$\EE{\spnorm{\sum_{t=1}^s Q_tQ_t^T}}=s\cdot{\max_i\sum_j\frac{\pij}{\tprowi\tpcolj}}
\leq s\cdot{\max_i\sum_j\frac{\pij}{\frac{1}{2}\prowi\cdot\frac{1}{2m}}}\leq{4sm}$.

Similarly, $\EE{\spnorm{\sum_{t=1}^s Q_t^TQ_t}}\leq{4sn}$. Setting $\sigma\doteq \sqrt{4sn}$ and applying~\citep{TroppTailBounds}, we obtain the result.
\end{proof}

Moving from Theorem~\ref{SquareRootThm} to Theorem~\ref{SmoothingThm},
we are competing with a different class of matrices:
$$\inf_{X\in\Wcal}L_p(X) \rightsquigarrow \inf_{X\in\tWcal}L_p(X)\;.$$
In most applications we can think of, this change is not significant.
For example, we consider the low-rank matrix reconstruction problem,
where the trace-norm bound is used as a surrogate for rank.  In order
for the (squared) weighted trace-norm to be a lower bound on the
rank, we would need to assume $\frnorm{\Diag{\pprow}^{\nicefrac{1}{2}}X\Diag{\ppcol}^{\nicefrac{1}{2}}}^2\leq 1$
\citep{RinaNatiCOLT}. If we also assume that $\norm{(X^*)_{(i)}}^2_2\leq m$ and
$\norm{(X^*)^{(j)}}^2_2\leq n$ for all rows $i$ and columns $j$
--- i.e.~the row and column
magnitudes are not ``spiky'' --- then $X^*\in\tWcal$. Note that this condition is much weaker than
placing a spikiness condition on $X^*$ itself, e.g.~requiring
$\infnorm{X^*}\leq 1$.

\subsection{Results on Netflix and MovieLens Datasets}\label{SmoothingSec2}
We evaluated different models on two publicly-available collaborative filtering datasets: Netflix~\citep{Netflix} and MovieLens~\citep{ML}.
The Netflix dataset consists of 100,480,507 ratings from 480,189 users
on 17,770 movies.
Netflix also
provides qualification set containing 1,408,395 ratings,
but due to the sampling scheme, ratings from users with few ratings are
overrepresented
 relative to the training set.
To avoid dealing with different training and
test distributions,
we also created our own validation and test sets, each containing
100,000 ratings
set aside from the training set. The MovieLens dataset contains 10,000,054 ratings
from 71,567 users and 10,681 movies.
We again
set aside  test and validation sets of 100,000 ratings.
Ratings were normalized to be zero-mean.

When dealing with large datasets the most practical way to
fit trace-norm regularized models is via stochastic gradient descent~\citep{KorenSVDpp,SalaMni07,RusAndNatiNIPS}.
For computational reasons, however, we consider rank-truncated trace-norm minimization, by optimizing
within the restricted class $\{X:X\in\Wcal,\rank{X}\leq k\}$ for $k=30$ and $k=100$, and for  various values of smoothing parameters $\alpha$ (as in~(\ref{Smoothing_alpha})).
For each value of $\alpha$ and $k$, the regularization parameter was chosen by cross-validation.

The following table shows root mean squared error (RMSE) for the experiments. For both k=30 and k=100 the weighted
trace-norm with smoothing significantly outperforms the weighted
trace-norm without smoothing ($\alpha=1$),
 even on the differently-sampled Netflix qualification set.
We also note
that the proposed weighted trace-norm with smoothing outperforms
max-norm regularization \citep{maxnorm}, and compares favorably with
the ``geometric'' smoothing used by \citep{RusAndNatiNIPS} as a
heuristic, without theoretical or conceptual justification.  A moderate value of $\alpha=0.9$ seems consistently
good.

\begin{tabular}{c|ccc|ccc || cc | cc}
\hline
   &           &   \multicolumn{4}{c}{Netflix}  & & \multicolumn{4}{c}{MovieLens} \\
$\alpha$ & k   &  Test & Qual  & k &  Test  & Qual & k & Test & k & Test \\ \hline
1    &  30  & 0.7604  & 0.9107              & 100 & 0.7404   & 0.9078   & 30   & 0.7852        &  100   & 0.7821 \\
0.9  &  30  & 0.7589  & 0.9096              & 100 & 0.7391   & 0.9068   & 30   & 0.7831        &  100  & 0.7798 \\
0.5  &  30  & 0.7601 & 0.9173               & 100 & 0.7419   & 0.9161   & 30   & 0.7836        &  100 & 0.7815 \\
0.3  &  30  & 0.7712  & 0.9198              & 100 & 0.7528   & 0.9207   & 30   & 0.7864        &  100 & 0.7871 \\
0    &  30  & 0.7887  & 0.9249              & 100 & 0.7659   & 0.9236   & 30   & 0.7997        & 100  & 0.7987 \\ \hline

\end{tabular}

\section{The empirically-weighted trace norm}

In practice, the sampling distribution $\pp$ is not known exactly --- it can only be estimated via the locations of the entries which are observed in the sample. Defining the empirical marginals
$$\hprowi=\frac{\#\{t:i_t=i\}}{s},\ \hpcolj=\frac{\#\{t:j_t=j\}}{s}\;,$$
we would like to give a learning guarantee when $\hat{X}_S$ is estimated via regularization on the $\hpp$-weighted trace-norm, rather than the $\pp$-weighted trace-norm.

In Section~\ref{EmpiricalSec1}, we give bounds on excess error when
learning with smoothed empirical marginals, which show that there is
no theoretical disadvantage as compared to learning with the smoothed
true marginals.  In fact, we provide evidence that suggests there
might even be an {\em advantage} to using the empirical marginals.  To
this end, in Section~\ref{EmpiricalSec2}, we introduce the
transductive learning setting, and give a result based on the
empirical marginals which implies a sample complexity bound that is better by
a factor of $\log^{\nicefrac{1}{2}} (n)$.  In Section~\ref{EmpiricalSec3}, we show
that in low-rank matrix reconstruction simulations, using empirical
marginals is indeed yields better reconstructions.

\subsection{Guarantee for the standard (inductive) setting}\label{EmpiricalSec1}

We first show that when learning with the smoothed empirical marginals, defined as
$$\cprowi=\tfrac{1}{2}\left(\hprowi+\tfrac{1}{n}\right), \ \cpcolj=\tfrac{1}{2}\left(\hpcolj+\tfrac{1}{m}\right)\;,$$
we can obtain the same guarantee as for learning with the smoothed (true) marginals, given by $\tpp$.

\begin{theorem}\label{EmpiricalThm} For an $l$-Lipschitz loss $\ell$, fix any matrix $Y$,
  sample size $s$, and any distribution $\pp$.  Let $\hat{X}_S=\arg\min\left\{\hat{L}_S(X) \ : \ X\in\cWcal\right\}$. Then, in expectation over the training sample $S$ drawn i.i.d. from the distribution $p$,
\begin{equation}\label{SquareRootEqEmp}L_p(\hat{X}_S)\leq \inf_{X\in\tWcal}L_p(X)+\mathbf{O}\left(l\cdot \sqrt{\frac{r\max\{n,m\}\log(n+m)}{s}}\right)\;.\end{equation}
\end{theorem}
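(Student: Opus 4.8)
The plan is to follow the Rademacher route of Theorem~\ref{SmoothingThm}, replacing the weights by their empirical versions $Q_t=\sigma_t\,e_{i_t,j_t}/\sqrt{\cprow{i_t}\cpcol{j_t}}$, and then to transfer the resulting bound on the empirically-weighted class $\cWcal$ into a guarantee that competes with $\tWcal$. The first and cleanest step is to bound $\empRad_S(\cWcal)$ \emph{conditionally on $S$}. Because $\cprowi\ge\tfrac1{2n}$ and $\cpcolj\ge\tfrac1{2m}$ by construction, we still have $\spnorm{Q_t}\le 2\sqrt{nm}\doteq R$. The key observation is that the variance proxy is now controlled \emph{deterministically}, for every realization of $S$: the matrix $\sum_t Q_tQ_t^\top$ is diagonal with $i$-th entry at most $(\Nrowi/\cprowi)\cdot 2m$, and $\Nrowi/\cprowi=2s\,\hprowi/(\hprowi+\tfrac1n)\le 2s$, so $\spnorm{\sum_t Q_tQ_t^\top}\le 4sm$ and likewise $\spnorm{\sum_t Q_t^\top Q_t}\le 4sn$. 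In other words, the empirical smoothing is self-regularizing: an over-represented row, whose weight would otherwise blow up, has its count $\Nrowi$ exactly cancelled in the denominator $\cprowi$. Applying Theorem~6.1 of~\citep{TroppTailBounds} in dimension $n+m$ (which is what produces the $\log(n+m)$ factor) then yields $\empRad_S(\cWcal)\le\mathbf{O}(\sqrt{r\max\{n,m\}\log(n+m)/s})$ for \emph{every} sample $S$, matching the bound of Theorem~\ref{SmoothingThm}.

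The second step converts this per-sample complexity bound into a learning guarantee against $\tWcal$. Unlike in Theorem~\ref{SmoothingThm}, the class $\cWcal$ depends on $S$, so the usual symmetrization---which swaps entries between $S$ and a ghost sample and thereby alters the empirical marginals, hence the class---does not apply directly. I would instead relate $\cWcal$ to the fixed class $\tWcal$ through concentration of the empirical marginals. We may assume $s\gtrsim n\log n$, since otherwise the claimed bound exceeds $\mathbf{O}(l\sqrt r)$ and holds trivially. Under this assumption a Bernstein bound together with a union bound over the $n+m$ marginals shows that, with probability $1-(n+m)^{-c}$, $\cprowi$ and $\tprowi$ (and $\cpcolj$ and $\tpcolj$) agree up to a constant factor simultaneously for all $i,j$; writing $G$ for this event, on $G$ the norms $\cwtrnorm{X}$ and $\twtrnorm{X}$ coincide up to a constant, i.e. $\cWcal\subseteq C\,\tWcal$ and $\tWcal\subseteq C\,\cWcal$. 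On $G$ the gap $L_p(\hat X_S)-\hat L_S(\hat X_S)$ is then controlled by the per-sample complexity of the fixed class $C\,\tWcal$ from Step~1, while $\E_S[\hat{L}_S(X^*)]=L_p(X^*)$ for the fixed minimizer $X^*=\arg\min_{X\in\tWcal}L_p(X)$. On the complement $G^c$, whose probability is polynomially (indeed exponentially) small, I would use the crude deterministic bounds $\hat{L}_S(\hat{X}_S)\le\hat{L}_S(\mathbf{0})$ and $|(\hat{X}_S)_{ij}|=\mathbf{O}(\sqrt{rnm})$ (valid since $\mathbf{0}\in\cWcal$ and $\cprowi\ge\tfrac1{2n}$), so that the polynomially-bounded loss times the tiny probability of $G^c$ contributes a negligible term to the expected excess error.

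The hard part is the empirical-risk comparison $\hat{L}_S(\hat{X}_S)\le\hat{L}_S(X^*)$. This step is immediate in the earlier inductive proofs because the competitor lies in the optimization class, but here $X^*\in\tWcal$ need \emph{not} lie in $\cWcal$: the empirical marginals can inflate $\cwtrnorm{X^*}$ by a constant factor precisely on the rows and columns where $X^*$ places its mass. The natural fix is to compete instead against the reweighted matrix $\tilde X_{ij}=\sqrt{\tprowi\tpcolj/(\cprowi\cpcolj)}\,X^*_{ij}$, which satisfies $\cwtrnorm{\tilde X}=\twtrnorm{X^*}\le\sqrt r$ and therefore lies in $\cWcal$ exactly; one then pays $|\ell(\tilde X_{ij},Y_{ij})-\ell(X^*_{ij},Y_{ij})|\le l\,\mathbf{O}(\epsilon)\,|X^*_{ij}|$ for the $\mathbf{O}(\epsilon)=\mathbf{O}(\sqrt{n\log n/s})$ multiplicative distortion of the weights on $G$. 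Bounding the accumulated error requires controlling $\E_p|X^*_{ij}|$ (or its empirical analogue) through the weighted Frobenius norm $\twfrnorm{X^*}\le\twtrnorm{X^*}\le\sqrt r$, and the delicate point is to do this without losing an extra $\sqrt m$ factor and while $\tilde X$ itself depends on $S$. Getting this transfer tight---so that the distortion is absorbed into the stated $\mathbf{O}(l\sqrt{r\max\{n,m\}\log(n+m)/s})$ rather than enlarging it---is where the real work lies; everything else follows the template of Theorem~\ref{SmoothingThm} together with the standard risk bound of~\citep{BartlettMendelson}.
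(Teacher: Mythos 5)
Your reduction to the fixed smoothed class is exactly the paper's: its Lemma~\ref{p_phat} is your event $G$ (Chernoff plus a union bound over the $n+m$ marginals gives $\cprowi\geq\frac{1}{2}\tprowi$, $\cpcolj\geq\frac{1}{2}\tpcolj$ with probability $1-2n^{-2}$, hence $\cWcal\subset2\cdot\tWcal$), and its Lemma~\ref{p_hat_sup} is your $G$/$G^c$ split, with the failure event absorbed via the crude bound $\infnorm{X}\leq2\sqrt{rnm}$ on $\cWcal$ times $P(G^c)\leq 2n^{-2}$; note the paper needs only the one-sided inclusion, not your two-sided $\tWcal\subseteq C\cdot\cWcal$. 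Your Step~1---the deterministic per-sample bound on $\empRad_S(\cWcal)$ via the self-normalization $\Nrowi/\cprowi=2s\Nrowi/(\Nrowi+\nicefrac{s}{n})\leq 2s$---is correct and rather elegant, but, as you yourself observe, it cannot feed an inductive generalization bound because symmetrization fails for a sample-dependent class; the paper never bounds $\empRad_S(\cWcal)$ at all and simply invokes Theorem~\ref{SmoothingThm} on $2\cdot\tWcal$. Your observation is essentially the mechanism that does pay off in the transductive setting of Theorem~\ref{EmpiricalImprovementThm}, where per-sample complexity suffices.

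Where you genuinely diverge is the step you defer as ``where the real work lies,'' and that step is the crux of this theorem. The paper does not reweight $X^*$: it shrinks by a scalar, competing with $(1-c(S))X^*$ where $c(S)=\max\left\{0,\cwtrnorm{\nicefrac{1}{\sqrt{r^*}}\,X^*}-1\right\}$, which lies in $\cWcal$ by construction, and then proves $\EE{c(S)}\leq\sqrt{\nicefrac{n}{2s}}$ by a pure mean--variance computation (Lemma~\ref{ExpectedTraceWt}: factor $\Diag{\tpprow}^{\nicefrac{1}{2}}X^*\Diag{\tppcol}^{\nicefrac{1}{2}}=AB^T$ and bound the variance of $\sum_i \frac{\Nrowi}{s\tprowi}\|A_{(i)}\|_2^2+\sum_j\frac{\Ncolj}{s\tpcolj}\|B_{(j)}\|_2^2$ using negative correlations of the counts), needing no high-probability event for this term at all; the shrinkage cost is then $l\cdot c(S)\sum_{ij}\pij|X^*_{ij}|\leq 2l\sqrt{r}\,c(S)$ via the AM--GM bound $\spnorm{M}\leq2$ for $M_{ij}=\pij/\sqrt{\tprowi\tpcolj}$ (Lemma~\ref{cS_lemma}). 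Your diagonal rescaling $\tilde{X}$ is a legitimate alternative, and in one respect cleaner: membership $\cwtrnorm{\tilde X}=\twtrnorm{X^*}\leq\sqrt{r}$ is exact and deterministic. Moreover, the transfer you leave open does close, using precisely the paper's $\spnorm{M}\leq2$ device: on $G$, Bernstein relative to $\tprowi\geq\nicefrac{1}{2n}$ gives a uniform multiplicative distortion $\mathbf{O}(\sqrt{n\log(n)/s})$, so the empirical cost $\frac{l}{s}\sum_t|\tilde{X}_{i_tj_t}-X^*_{i_tj_t}|$ has expectation at most $\mathbf{O}(\sqrt{n\log(n)/s})\cdot l\sum_{ij}\pij|X^*_{ij}|\leq \mathbf{O}(l\sqrt{rn\log(n)/s})$---no $\sqrt{m}$ is lost, because the control runs through $\sum_{ij}\pij|X^*_{ij}|\leq2\sqrt{r}$ rather than entrywise. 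This is a $\sqrt{\log(n)}$ worse than the paper's $\mathbf{O}(l\sqrt{rn/s})$ shrinkage cost but still within the theorem's budget. So there is no fatal flaw, but your write-up stops exactly at the theorem's hardest point, which the paper resolves with a different (scalar, expectation-level) construction that also sidesteps conditioning on $G$ for the comparison term.
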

Note that although we regularize using the (smoothed)
empirically-weighted trace-norm, we still compare ourselves to
the best possible matrix in the class defined by the (smoothed) true
marginals.

The proof of the Theorem (given in  Appendix~\ref{AppendixProofs}) uses
Theorem \ref{SmoothingThm} and involves showing that with a sample of size $s =
\mathbf{\Omega}(n \log (n))$, which is required for all Theorems so far to be
meaningful, the true and empirical marginals are the
same up to a constant factor. For this to be the case, such a sample size is even
necessary.  In fact, the $\log(n)$ factor in our analysis (e.g. in
the proof of Theorem \ref{SquareRootThm}) arises from the bound on the expected
spectral norm of a matrix, which, for a diagonal matrix, is just a
bound on the deviation of empirical frequencies.  Might it be possible, then, to avoid this
logarithmic factor by using the empirical marginals?
Although we
could not establish such a result in the inductive setting,
we now turn to the transductive setting, where we could indeed obtain a
better guarantee.

\subsection{Guarantee for the transductive setting}\label{EmpiricalSec2}

In the transductive model, we fix a set
$\overline{S}\subset[n]\times[m]$ of size $2s$, and then randomly
split $\overline{S}$ into a training set $S$ and a test set $T$ of
equal size $s$.  The goal is to obtain a good estimator for the
entries in $T$ based on the values of the entries in $S$, as well as
the locations (indexes) of all elements on $\overline{S}$. We then use the (smoothed or unsmoothed) empirical marginals of $\overline{S}$, for the weighted trace-norm.

We now show that, for bounded loss, there may be a benefit to
weighting with the smoothed empirical marginals --- the sample size requirement can be lowered to
$s=\mathbf{O}\left(rn\log^{\nicefrac{1}{2}}(n)\right)$.

\begin{theorem}\label{EmpiricalImprovementThm}
For an $l$-Lipschitz loss $\ell$ bounded by $b$, fix any matrix $Y$ and sample size $s$.  Let $\overline{S}\subset[n]\times[m]$ be a fixed subset of size $2s$, split uniformly at random into training and test sets $S$ and $T$, each of size $s$. Let $\opp$ denote the smoothed empirical marginals of $\overline{S}$. Let $\hat{X}_S=\arg\min\left\{\hat{L}_S(X) \ : \ X\in\oWcal\right\}$. Then in expectation over the splitting of $\overline{S}$ into $S$ and $T$,
\begin{equation}\label{SquareRootEqEmpImp}\hat{L}_T(\hat{X}_S)\leq \inf_{X\in\oWcal}\hat{L}_T(X)+\mathbf{O}\left(l\cdot \sqrt{\frac{rn\log^{\nicefrac{1}{2}}(n)}{s}}+\frac{b}{\sqrt{s}}\right)\;.\end{equation}
\end{theorem}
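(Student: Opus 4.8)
The plan is to follow the transductive Rademacher-complexity route of \citep{S_SS_COLT}, exploiting a feature special to this setting: once $\overline{S}$ is fixed, the smoothed empirical marginals $\opp$ --- and hence the norm $\owtrnorm{\cdot}$ and the comparison class $\oWcal$ --- are completely determined and \emph{non-random}. All remaining randomness lives in the split of $\overline{S}$ into $S$ and $T$, i.e.~in which half of the $2s$ fixed index pairs is labelled ``train.'' This is exactly the regime in which transductive Rademacher averages are defined, so I would first invoke the standard transductive symmetrization bound (as used in \citep{S_SS_COLT}) to control $\Ep{}{\hat{L}_T(\hat{X}_S)-\inf_{X\in\oWcal}\hat{L}_T(X)}$ by a constant times the transductive Rademacher complexity of the loss class $\ell\circ\oWcal$ over the fixed point set $\overline{S}$, plus a lower-order deviation term. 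For an $l$-Lipschitz, $b$-bounded loss this splits into two contributions: the contraction principle peels off the loss and leaves $l$ times the complexity of $\oWcal$ itself, while the bounded range of $\ell$ produces an additive concentration term of order $b/\sqrt{s}$ from the fluctuation of the empirical test average under a uniform split (a bounded-difference / sampling-without-replacement estimate). This already accounts for the $\tfrac{b}{\sqrt{s}}$ summand in~(\ref{SquareRootEqEmpImp}).

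Next I would reduce the Rademacher complexity of $\oWcal$ to a spectral-norm expectation exactly as in the proofs of Theorems~\ref{SquareRootThm} and~\ref{SmoothingThm}. Using trace-/spectral-norm duality and $\owtrnorm{X}\le\sqrt{r}$, the complexity is bounded by
$$\frac{\sqrt{r}}{s}\,\Ep{\sigma}{\spnorm{\sum_{(i,j)\in\overline{S}}\sigma_{ij}\,\frac{e_{i,j}}{\sqrt{\oprowi\opcolj}}}}\;,$$
where now the $\sigma_{ij}$ are the three-valued transductive sign variables indexed by the fixed set $\overline{S}$. The decisive structural point is that, unlike the inductive matrices $Q_t$ of Theorem~\ref{SquareRootThm} --- which were random through \emph{both} their location and their sign, forcing the matrix Bernstein bound of \citep{TroppTailBounds} with its $R\log(n)$ term --- here the locations are frozen and only the signs are random. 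Moreover, because $\opp$ are \emph{exactly} the smoothed empirical marginals of $\overline{S}$, the deterministic variance proxies are controlled with no slack: $\max_i\sum_{j:(i,j)\in\overline{S}}\tfrac{1}{\oprowi\opcolj}$ and its column analogue are both $\mathbf{O}(sn)$, using only the smoothing lower bounds $\oprowi\ge\tfrac{1}{2n}$, $\opcolj\ge\tfrac{1}{2m}$ together with the matching identities $\sum_{j}\indicator{(i,j)\in\overline{S}}=\Nrowi$ and $\oprowi\ge\Nrowi/(4s)$.

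The crux --- and the only place where the promised saving of a $\log^{\nicefrac{1}{2}}(n)$ factor relative to Theorem~\ref{SmoothingThm} is actually won --- is a sharp bound on the expected spectral norm of this frozen-location, sign-random matrix:
$$\Ep{\sigma}{\spnorm{\sum_{(i,j)\in\overline{S}}\sigma_{ij}\,\frac{e_{i,j}}{\sqrt{\oprowi\opcolj}}}}=\mathbf{O}\!\left(\sqrt{sn}\cdot\log^{\nicefrac{1}{4}}(n)\right)\;.$$
A black-box application of matrix Bernstein or of the non-commutative Khintchine inequality only yields $\mathbf{O}(\sqrt{sn\log(n)})$, i.e.~the same rate as the inductive analysis, because in those bounds the $\sqrt{\log(n)}$ multiplies the full variance proxy $\sqrt{sn}$. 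The plan is instead to estimate the expectation by the trace-moment method --- bounding $\Ep{\sigma}{\spnorm{M}}$ by $\big(\EE{\tr\big((MM^T)^p\big)}\big)^{\nicefrac{1}{2p}}$ and optimizing $p$ --- which, for a Rademacher sum of fixed rank-one matrices, confines the logarithmic factor to the \emph{largest single entry} $\max_{ij}\tfrac{1}{\sqrt{\oprowi\opcolj}}$ rather than to the variance proxy. This is where the bounded-loss assumption re-enters: the entries carrying tiny (essentially only smoothing-induced) marginals have enormous individual weight and would dominate that term, so I would first discard them, charging their total contribution --- a vanishing fraction of the $2s$ observations --- to the $\tfrac{b}{\sqrt{s}}$ deviation term, and run the spectral estimate only on the surviving, adequately-sampled entries. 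Reconciling the truncation threshold, the choice of $p$, and the constants so that the two pieces combine to~(\ref{SquareRootEqEmpImp}) is the step I expect to be the main technical obstacle; by contrast the transductive symmetrization, the Lipschitz contraction, and the trace-/spectral-norm duality are direct transcriptions of the arguments already used for Theorems~\ref{SquareRootThm} and~\ref{SmoothingThm}.
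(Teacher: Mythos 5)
Your route is essentially the paper's: condition on $\overline{S}$ so that $\opp$ and hence $\oWcal$ are deterministic, bound the transductive Rademacher complexity of $\oWcal$ via trace-/spectral-norm duality (the paper first passes from the $s$ training signs to a full sign matrix $\sigma\in\{\pm1\}^{n\times m}$ using Lemma 12 of \citep{ShraibmanSrebro}), and then invoke the transductive generalization bound of \citep{S_SS_COLT}, whose $b^2/s$ term supplies the $b/\sqrt{s}$ summand exactly as you describe. The one real divergence is at your self-declared crux. The paper runs no trace-moment computation and needs no truncation: writing the complexity as $\sqrt{r}\cdot\Ep{\sigma}{\spnorm{\sigma\bullet\Sigma}}$ with $\Sigma_{ij}=\indicator{(i,j)\in S}/\bigl(s\sqrt{\oprowi\opcolj}\bigr)$, it cites Seginer's theorem, which bounds $\Ep{\sigma}{\spnorm{\sigma\bullet\Sigma}}$ by $\mathbf{O}\left(\log^{\nicefrac{1}{4}}(n+m)\right)\cdot\max\left\{\max_i\norm{\Sigma_{(i)}}_2,\max_j\norm{\Sigma^{(j)}}_2\right\}$ --- a bound containing no largest-entry term at all. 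The self-normalization you yourself identified ($\oprowi\geq \Nrowi/(4s)$, where $\Nrowi$ counts occurrences of row $i$ in $\overline{S}$, so the row count cancels) gives $\norm{\Sigma_{(i)}}_2^2\leq 8m/s$ and $\norm{\Sigma^{(j)}}_2^2\leq 8n/s$ uniformly, and the theorem follows immediately. Your worry that smoothing-induced tiny marginals create dominant entries is unfounded in this self-weighted setting: any $(i,j)\in\overline{S}$ has $\oprowi\geq 1/(2n)$ and $\opcolj\geq 1/(2m)$, so $\Sigma_{ij}\leq 2\sqrt{nm}/s$, which is dominated by $\sqrt{8m/s}$ whenever $s\gtrsim n$, the only regime where the bound is non-vacuous. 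This matters because your fallback accounting is shaky: discarding low-marginal entries and charging them to the $b/\sqrt{s}$ term costs roughly $b$ times the fraction of discarded observations, which is $O(b/\sqrt{s})$ only if $O(\sqrt{s})$ entries are dropped --- something you have not argued, and indeed this truncate-and-count device is what the paper's transductive analog of Theorem~\ref{CubeRootThm} uses, where it only yields the cube-root rate. In short: replace your trace-moment-plus-truncation step with Seginer's inequality (or equivalently carry out the moment computation in the max-row/column-norm form, which is precisely how Seginer's result is proved) and your argument coincides with the paper's proof.
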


This result (proved in Appendix~\ref{AppendixTransductive}) is stated
in the transductive setting, with a somewhat different sampling
procedure and evaluation criteria, but we believe the main difference
is in the use of the empirical weights.  Although it is usually
straightforward to convert a transductive guarantee to an inductive
one, the situation here is more complicated, since the hypothesis
class depends on the weighting, and hence on the sample
$\overline{S}$.  Nevertheless, we believe such a conversion might be
possible, establishing a similar guarantee for learning with the
(smoothed) empirically weighted trace-norm also in the inductive
setting.  Furthermore, by using the fact that a sample of size
$s=\Theta(n \log(n))$ is sufficient for the empirical marginals to be
close to the true marginals, it might be possible to obtain a learning
guarantee for the true (non-empirical) weighting with a sample of size
$s=\mathbf{O}(n(r \log^{\nicefrac{1}{2}}(n) + \log(n)))$.

Theorem~\ref{EmpiricalImprovementThm} above can be viewed as a
transductive analog to Theorem~\ref{SmoothingThm} (where weights are
based on the combined sample $\overline{S}$).  In Appendix
\ref{AppendixTransductive} we state and prove transductive analogs
also to Theorem~\ref{SquareRootThm} (for the case where smoothing is
not needed) and Theorem~\ref{CubeRootEq} (giving a cubic-root
rate).  As mentioned in Section \ref{SmoothingExamples}, our lower
bound examples can also be stated in the transductive setting, and
thus all our guarantees and lower bounds can also be obtained in this
setting.

\subsection{Simulations with empirical weights}\label{EmpiricalSec3}

In order to numerically investigate the possible advantage of
empirical weighting, we performed simulations on low-rank matrix
reconstruction under uniform sampling with the unweighted, and the
smoothed empirically weighted, trace-norms.  We choose to work with
uniform sampling in order to emphasize the benefit of empirical
weights, even in situations where one might not consider to use any
weights at all.  In all the experiments, we attempt to reconstruct a
possibly noisy, random rank-2 ``signal'' matrix $M$ with
singular values $\tfrac{1}{\sqrt{2}}(n,n,0,\dots,0)$, ensuring
$\frnorm{M}=n$, measuring error using the squared
loss\footnote{Although the squared loss is  Lipschitz in a bounded domain, it is probably
  possible to improve all our results (removing the square root) in
  the special case of the squared loss, possibly with the additional assumption of i.i.d.~noise
 , as in \citep{NW}.}.   Simulations were performed using \textsc{Matlab}, with code adapted from the \textsc{SoftImpute} code developed by~\citep{SoftImpute}. We performed two types of simulations:

\paragraph{Sample complexity comparison in the noiseless setting:} We define $Y=M$, and compute
$\hat{X}_S=\arg\min\left\{\norm{X}:\hat{L}_S(X)=0\right\}$,
where $\norm{X}=\trnorm{X}$ or $=\hwtrnorm{X}$, as appropriate.
 In Figure~\ref{p_hat_results}(a), we plot the average number of samples per row needed to get average squared error (over 100 repetitions) of at most $0.1$, with both uniform weighting and empirical weighting.
\paragraph{Excess error comparison in the noiseless and noisy settings:} We define $Y=M+\nu N$, where noise $N$ has i.i.d. standard normal entries. We compute
$\hat{X}_S=\arg\min\left\{\norm{X}:\hat{L}_S(X)\leq \nu^2\right\}$.
In Figure~\ref{p_hat_results}(b), we plot the resulting average squared error (over 100 repetitions) over a range of sample sizes $s$ and noise levels $\nu$, with both uniform weighting and empirical weighting.

The results from both experiments show a significant benefit to using the empirical marginals.

\begin{figure}[htbp]
\begin{center}
\includegraphics[width=7.5cm]{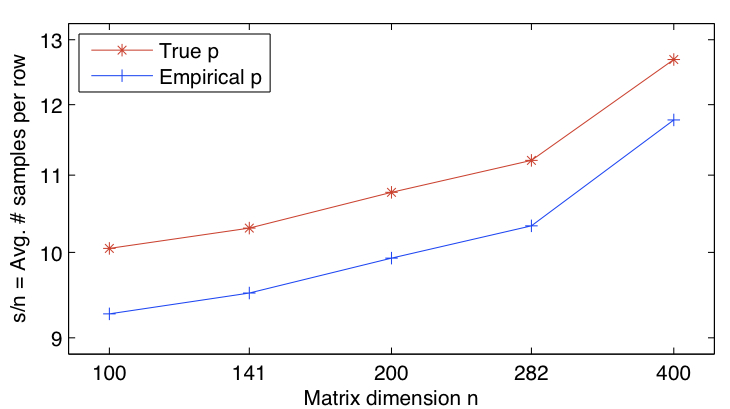}
\includegraphics[width=7.5cm]{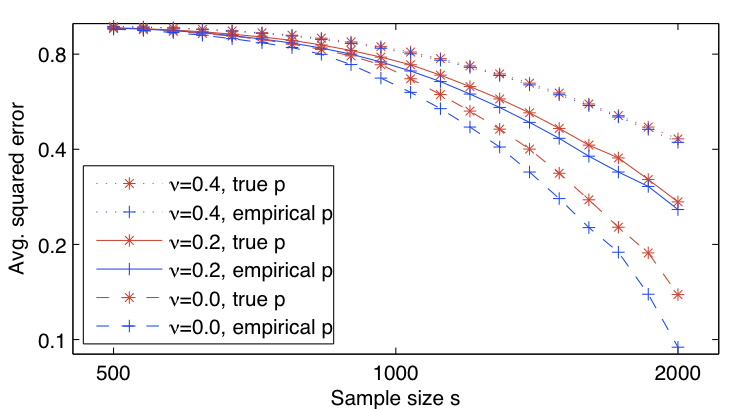}
\caption{\it (a) Left: Sample size needed to obtain avg. error $0.1$, with respect to $n$.  (b) Right: Excess error level over a range of sample sizes, for fixed $n=200$. (Axes are on a logarithmic scale.)}
\label{p_hat_results}
\end{center}
\end{figure}

\section{Discussion}

In this paper, we prove learning guarantees for the weighted
trace-norm by analyzing expected Rademacher complexities. We show that
weighting with smoothed marginals eliminates degenerate scenarios
that can arise in the case of a non-product sampling distribution, and
demonstrate in experiments on the Netflix and MovieLens datasets that
this correction can be useful in applied settings. We also give
results for empirically-weighted trace-norm regularization, and see
indications that using the empirical distribution may
be better than using the true distribution, even if it is available.

\bibliography{WTNbib}

\appendix

\section{Proofs for the i.i.d. sampling setting}\label{AppendixProofs}
\subsection{Proof of Theorem~\ref{SquareRootThm}}
We first fill in the details for the Rademacher bound in the case that $\pp$ has uniform row- and column-marginals. Define
$$Q_t = \sigma_t \frac{e_{i_t,j_t}}{\sqrt{\prow{i_t}\pcol{j_t}}}
\in \R^{n \times m}\;.$$
We need to calculate $R$ and $\sigma^2$ such that $\spnorm{Q_t}\leq R$ (almost surely) and $$\sigma^2 =
\max\left\{\spnorm{\sum\EE{Q_t^T Q_t}},\spnorm{\sum\EE{Q_t
      Q^T_t}} \right\}.$$
For each $t$, $Q_t$ is just a matrix with a single non-zero entry of
magnitude $\frac{1}{\sqrt{\prowi\pcolj}}$, for some $i,j$, and so
$\spnorm{Q_t} \leq \max_{ij}\frac{1}{\sqrt{\prowi\pcolj}}\doteq R$.

The matrix $Q_t Q_t^T \in \R^{n \times n}$ is equal to
$\frac{e_{i,i}}{\prowi\pcolj}$ with probability $\pij$.
Hence $\EE{Q_t^T Q_t}$ is a diagonal matrix with entries $\sum_j
\frac{\pij}{\prowi\pcolj}$.  Similar arguments apply to
$Q_t^T Q_t$.  Multiplying by $s$, and recalling the spectral norm of a
diagonal matrix is simply the maximal magnitude element, we have:
\begin{equation*}
  \sigma^2 = s \cdot \max\left\{\max_i \sum_j
    \frac{\pij}{\prowi\pcolj},\max_j\sum_i\frac{\pij}{\prowi\pcolj}\right\}\;.
\end{equation*}
This completes the proof for the case that $\pp$ has uniform row- and column- marginals.

 Next we turn to the case that $\pp$ is a product distribution, $\pp=\pprow\times \ppcol$ (with possibly non-uniform marginals). For any $X\in\Wcal$, define
$$Z(X)=\left(X_{ij}\indicator{\pij\geq\frac{\log(n)}{s\sqrt{nm}}}\right)_{ij}\;.$$
Let  $\Zcal=\left\{Z(X):X\in\Wcal\right\}$.

We can then follow the proof of the bound in the uniform-marginals case, with a modified definition of $Q_t$:
 $$Q_t = \sigma_t \frac{e_{i_t,j_t}\indicator{\p{i_t}{j_t}\geq\frac{\log(n)}{s\sqrt{nm}}}}{\sqrt{\prow{i_t}\pcol{j_t}}}\;.$$
Proceeding as in the proof for Theorem~\ref{SquareRootThm}, we obtain $R\leq \sqrt{\frac{s\sqrt{nm}}{\log(n)}}$ and $\sigma^2\leq sn$, and thus
$$\Ep{S\sim p}{\hat{\Rcal}_S(\Zcal)}= \mathbf{O}\left(\sqrt{\frac{rn\log(n)}{s}}\right)\;.$$
Therefore, by~\citep{BartlettMendelson}, \begin{align*}
&\EE{\sup_{X\in\Wcal}L_p(Z(X))-\hat{L}_S(Z(X))}\leq  \mathbf{O}\left(l\cdot \sqrt{\frac{rn\log(n)}{s}}\right)\;,\\
&\EE{\sup_{X\in\Wcal}\hat{L}_S(Z(X))-L_p(Z(X))}\leq  \mathbf{O}\left(l\cdot \sqrt{\frac{rn\log(n)}{s}}\right)\;.\end{align*}

Next, let $I=\left(\sqrt{\pij}\indicator{\pij<\frac{\log(n)}{s\sqrt{nm}}}\right)_{ij}$. For any matrix $M$, define
$$\wfrnorm{M}=\frnorm{\Diag{\pprow}^{\nicefrac{1}{2}}M\Diag{\ppcol}^{\nicefrac{1}{2}}}\;.$$
Now take any $M$ with $\wfrnorm{M}\leq 1$. Let $M'=\Diag{\pprow}^{\nicefrac{1}{2}}M\Diag{\ppcol}^{\nicefrac{1}{2}}$, then $\frnorm{M'}\leq 1$. We have
\begin{align*}
&\sum_{ij:\pij<\frac{\log(n)}{s\sqrt{nm}}}\pij M_{ij}=\sum_{ij} I_{ij}M'_{ij}=\langle I, M'\rangle\leq \frnorm{I}\cdot\frnorm{M'}\\
&\leq \frnorm{I}=\sqrt{\sum_{ij}\pij\indicator{\pij<\frac{\log(n)}{s\sqrt{nm}}}}\leq \sqrt{nm\cdot\frac{\log(n)}{s\sqrt{nm}}}=\sqrt{\frac{\sqrt{nm}\log(n)}{s}}\;.
\end{align*}

Since $\frnorm{M}\leq\trnorm{M}$ for any matrix $M$, we then have, for any $X\in\Wcal$, $\wfrnorm{X}\leq \wtrnorm{X}\leq \sqrt{r}$, and so
\begin{align*}&\left|L_p(X)-L_p(Z(X))\right|
=\left|\sum_{ij\not\in\Ical}\pij\left(\ell(X_{ij},Y_{ij})-\ell(0,Y_{ij})\right)\right|\\&\leq l\cdot \sum_{ij\not\in\Ical}\pij|X_{ij}|\leq \sqrt{\frac{l^2r\sqrt{nm}\log(n)}{s}}\;.\end{align*}

And, fixing some $X^*\in\Wcal$ such that $L_p(X^*)=\inf_{X\in\Wcal} L_p(X)$,
\begin{align*}
&\EE{\sup_{X\in\Wcal}\hat{L}_S(Z(X))-\hat{L}_S(X)}+\EE{\hat{L}_S(X^*)-\hat{L}_S(Z(X^*))}\\
&=\EE{\sup_{X\in\Wcal}\frac{1}{s}\sum_{t=1}^s \indicator{(i_t,j_t)\not\in\Ical}\left(\ell(0,Y_{i_tj_t})-\ell(X_{i_tj_t},Y_{i_tj_t})\right)}\\
& \ \ \ \ \ \ \ \ \ +\EE{\frac{1}{s}\sum_{t=1}^s \indicator{(i_t,j_t)\not\in\Ical}\left(\ell(X^*_{i_tj_t},Y_{i_tj_t})-\ell(0,Y_{i_tj_t})\right)}\\
&=\EE{\sup_{X\in\Wcal}\frac{1}{s}\sum_{t=1}^s \indicator{(i_t,j_t)\not\in\Ical}\left(\ell(X^*_{i_tj_t},Y_{i_tj_t})-\ell(X_{i_tj_t},Y_{i_tj_t})\right)}\\
&\leq \EE{\sup_{X\in\Wcal}\frac{1}{s}\sum_{t=1}^s \indicator{(i_t,j_t)\not\in\Ical}\ell(X^*_{i_tj_t},Y_{i_tj_t})}\leq l\cdot \EE{\frac{1}{s}\sum_{t=1}^s \indicator{(i_t,j_t)\not\in\Ical}|X^*_{i_tj_t}|}\\
&=l\cdot  \EE{ \indicator{(i_1,j_1)\not\in\Ical}|X^*_{i_1j_1}|}=l\cdot \sum_{ij\not\in\Ical}\pij|X^*_{ij}|\leq  \sqrt{\frac{l^2r\sqrt{nm}\log(n)}{s}}\\
\end{align*}

Then writing
$$L_p(\hat{X}_S)-L_p(X^*)=(L_p(\hat{X}_S)-L_p(Z(\hat{X}_S)))+(L_p(Z(\hat{X}_S))-\hat{L}_S(Z(\hat{X}_S)))+(\hat{L}_S(Z(\hat{X}_S))-\hat{L}_S(\hat{X}_S))$$
$$+(\hat{L}_S(\hat{X}_S)-\hat{L}_S(X^*))+(\hat{L}_S(X^*)-\hat{L}_S(Z(X^*)))+(\hat{L}_S(Z(X^*))-L_p(Z(X^*)))+(L_p(Z(X^*))-L_p(X^*))\;,$$
we obtain
$$\EE{L_p(\hat{X}_S)-L_p(X^*)}\leq \mathbf{O}\left(\sqrt{\frac{l^2 rn\log(n)}{s}}\right)\;.$$

\subsection{Proof of Theorem~\ref{CubeRootThm}}\label{app:CubeRootThmTrans}

Assume $\ell$ is $l$-Lipschitz and $b$-bounded, and $r\geq 1$. We will show that (for any $\pp$)
$$\Ep{S\sim p}{\hat{\Rcal}_S(\ell\circ\Wcal)}= \mathbf{O}\left((l+b)\cdot \sqrt[3]{\frac{rn\log(n)}{s}}\right)\;.$$
Given a sample $S$, define
$$T^0_S=\left\{t \ : \ \prow{i_t}\text{ or }\pcol{j_t}<\sqrt[3]{ \frac{l^2r\log(n)}{b^2sn^2}}\right\}\;, T^1_S=\{1,\dots,s\}\backslash T^0_S\;.$$
We have
  \begin{align*}
   & \empRad_S(\ell\circ \Wcal) = \Ep{\sigma\sim\{\pm1\}^s}{\sup_{\wtrnorm{X} \leq \sqrt{r}} \frac{1}{s} \sum_{t=1}^s \sigma_t\cdot \ell(X_{i_tj_t},Y_{i_tj_t})}\\
      &\leq \Ep{\sigma}{\sup_{\wtrnorm{X} \leq \sqrt{r}}\frac{1}{s} \sum_{t\in T^0_S} \sigma_t\cdot \ell(X_{i_tj_t},Y_{i_tj_t})}
      + \Ep{\sigma}{\sup_{\wtrnorm{X} \leq \sqrt{r}}
      \frac{1}{s} \sum_{t\in T^1_S} \sigma_t\cdot \ell(X_{i_tj_t},Y_{i_tj_t})} \\
   \end{align*}
Bounding the first term,
\begin{align*}
&\Ep{\sigma}{\sup_{\wtrnorm{X} \leq \sqrt{r}}      \frac{1}{s} \sum_{t\in T^0_S} \sigma_t\cdot \ell(X_{i_tj_t},Y_{i_tj_t})} \leq \Ep{\sigma}{\frac{1}{s} \sum_{t\in T^0_S} \left|\sigma_t\right|\cdot b}
=\frac{b}{s}\cdot \left|T^0_S\right|\;.\end{align*}
In expectation over $S$,
\begin{align*}
&\Ep{S}{\frac{b}{s}\cdot \left|T^0_S\right|}=b\cdot\Ep{ij\sim p}{\indicator{\prowi\text{ or }\pcolj<\sqrt[3]{ \frac{l^2r\log(n)}{b^2sn^2}}}}\\
&=b\cdot\sum_{ij}\pij\indicator{\prowi\text{ or } \pcolj< \sqrt[3]{ \frac{l^2r\log(n)}{b^2sn^2}}}\\
&\leq \left[ b\cdot\sum_{i:\prowi<\sqrt[3]{ \frac{l^2r\log(n)}{b^2sn^2}}}\sum_j \pij\right] +\left[b\cdot \sum_{j:\pcolj<\sqrt[3]{ \frac{l^2r\log(n)}{b^2sn^2}}}\sum_i \pij\right]\\
&=\left[ b\cdot\sum_{i:\prowi<\sqrt[3]{ \frac{l^2r\log(n)}{b^2sn^2}}}\prowi \right]+\left[ b\cdot\sum_{j:\pcolj<\sqrt[3]{ \frac{l^2r\log(n)}{b^2sn^2}}}\pcolj\right]\\
&\leq bn\cdot \sqrt[3]{ \frac{l^2r\log(n)}{b^2sn^2}}+ bm \sqrt[3]{ \frac{l^2r\log(n)}{b^2sn^2}}\leq 2\sqrt[3]{\frac{l^2brn\log(n)}{s}}\;.\end{align*}

To bound the second term, we use the fact that $\trnorm{\text{abs}(X)}\leq\trnorm{X}$ for any matrix $X$, where $\text{abs}(X)$ is the matrix defined via $\text{abs}(X)_{ij}=|X_{ij}|$. We have
\begin{align*}
&  \Ep{\sigma}{\sup_{\wtrnorm{X} \leq \sqrt{r}}\frac{1}{s} \sum_{t\in T^1_S} \sigma_t\cdot \ell(X_{i_tj_t},Y_{i_tj_t})}\\
& \leq \Ep{\sigma}{\sup_{\wtrnorm{X} \leq \sqrt{r}}\frac{1}{s} \sum_{t\in T^1_S} \sigma_t\cdot \left(\ell(X_{i_tj_t},Y_{i_tj_t})-\ell(0,Y_{i_tj_t}\right)}+\Ep{\sigma}{\sup_{\wtrnorm{X} \leq \sqrt{r}}\frac{1}{s} \sum_{t\in T^1_S} \sigma_t\cdot \ell(0,Y_{i_tj_t})}\\
& = \Ep{\sigma}{\sup_{\wtrnorm{X} \leq \sqrt{r}}\frac{1}{s} \sum_{t\in T^1_S} \sigma_t\cdot \left(\ell(X_{i_tj_t},Y_{i_tj_t})-\ell(0,Y_{i_tj_t}\right)}
 \leq l\cdot\Ep{\sigma}{\sup_{\wtrnorm{X} \leq \sqrt{r}}\frac{1}{s} \sum_{t\in T^1_S} \sigma_t\cdot |X_{i_tj_t}|} \\
&= l\cdot\Ep{\sigma}{\sup_{\trnorm{X'} \leq \sqrt{r}}\frac{1}{s} \sum_{t\in T^1_S} \frac{\sigma_t}{\sqrt{\prow{i_t}\pcol{j_t}}}\cdot |X'_{i_tj_t}|} \leq l\cdot\Ep{\sigma}{\sup_{\trnorm{X''} \leq \sqrt{r}}\frac{1}{s} \sum_{t\in T^1_S} \frac{\sigma_t}{\sqrt{\prow{i_t}\pcol{j_t}}}\cdot X''_{i_tj_t}} \\
&=l\sqrt{r}\cdot\Ep{\sigma}{\spnorm{\frac{1}{s} \sum_{t=1}^s\sigma_t\frac{e_{(i_t,j_t)}\indicator{\prow{i_t},\pcol{j_t}\geq\sqrt[3]{ \frac{l^2r\log(n)}{b^2sn^2}}}}{\sqrt{\prow{i_t}\pcol{j_t}}}}}\;,
\end{align*}
Defining $Q_t=\sigma_t\frac{e_{(i_t,j_t)}\indicator{\prow{i_t},\pcol{j_t}\geq\sqrt[3]{ \frac{l^2r\log(n)}{b^2sn^2}}}}{\sqrt{\prow{i_t}\pcol{j_t}}}$, we can follow identical arguments as in the proof of the first bound of this theorem. We have
$$\spnorm{Q_t}\leq \max_{ij}\frac{\indicator{\prowi,\pcolj\geq\sqrt[3]{ \frac{l^2r\log(n)}{b^2sn^2}}}}{\sqrt{\prowi\pcolj}}\leq \sqrt[3]{\frac{b^2sn^2}{l^2r\log(n)}}\doteq R\;,$$
and
\begin{align*}
&\sigma^2 \doteq
\max\left\{\spnorm{\sum\EE{Q_t^T Q_t}},\spnorm{\sum\EE{Q_t
      Q^T_t}} \right\}\\
      &\leq s  \cdot \max\left\{\max_i \sum_j
    \frac{\pij\indicator{\prowi,\pcolj\geq\sqrt[3]{ \frac{l^2r\log(n)}{b^2sn^2}}}}{\prowi\pcolj},\right.\\
    &\left.\hspace{5cm} \max_j\sum_i\frac{\pij\indicator{\prowi,\pcolj\geq\sqrt[3]{ \frac{l^2r\log(n)}{b^2sn^2}}}}{\prowi\pcolj}\right\}\\
      &\leq s\cdot \sqrt[3]{ \frac{b^2sn^2}{l^2r\log(n)}}  \cdot \max\left\{\max_i \sum_j
    \frac{\pij}{\prowi},\max_j\sum_i\frac{\pij}{\prowi}\right\}=\sqrt[3]{ \frac{b^2s^4n^2}{l^2r\log(n)}}\\
      \end{align*}
Then applying~\citep{TroppTailBounds}, we get
\begin{align*}
& \Ep{\sigma}{\sup_{\wtrnorm{X} \leq \sqrt{r}}\frac{1}{s} \sum_{t\in T^1_S} \sigma_t\cdot \ell(X_{i_tj_t},Y_{i_tj_t})}=\frac{l\sqrt{r}}{s}\Ep{S,\sigma}{\spnorm{\sum_{t=1}^sQ_t}}\\
&\leq\mathbf{O}\left(\frac{l\sqrt{r}}{s}\left(\sigma\sqrt{\log(n)}+R\log(n)\right)\right)\\
& \leq\mathbf{O}\left( \frac{l\sqrt{r}}{s}\left(\sqrt[6]{ \frac{b^2s^4n^2}{l^2r\log(n)}}\sqrt{\log(n)}+\sqrt[3]{\frac{b^2sn^2}{l^2r\log(n)}}\log(n)\right)\right)\\
&\leq\mathbf{O}\left(l^{\nicefrac{2}{3}}b^{\nicefrac{1}{3}}\sqrt[3]{\frac{rn\log(n)}{s}}+l^{\nicefrac{1}{3}}b^{\nicefrac{2}{3}}\left(\sqrt[3]{\frac{rn\log(n)}{s}}\right)^2\right)\;.\end{align*}
If $s\geq rn\log(n)$, then this proves the bound. If not, then the result is trivial, since $L_p(X)\leq b$ for any $X$.

\subsection{Proof of Theorem~\ref{EmpiricalThm}}

Throughout this section, assume $s\geq 24n\log(n)$. (If this is not the case, then we only need to prove excess error $\leq \mathbf{O}(l\sqrt{r})$, which is trivial given the class $\cWcal$.)
We also assume $s\leq \mathbf{O}\left(nm\log(nm)\right)$. (If this is not the case, then with high probability, we observe all entries of the matrix and obtain optimal recovery.)
The lemmas which are cited in this proof, are proved below.

Define
$$X^*=\arg\min_{X\in\tWcal}L_p(X), \ \ \ \ \  r^*=\wtrnorm{X^*}^2\leq r\;.$$

For any sample $S$, define
$$c(S)=\max\left\{0,\cwtrnorm{\frac{1}{\sqrt{r^*}}X^*}-1\right\}\;.$$
Then, for a fixed $S$,
$$\cwtrnorm{(1-c(S))X^*}=\sqrt{r^*}(1-c(S))\twtrnorm{\frac{1}{\sqrt{r^*}}X^*}\leq \sqrt{r} \ \Rightarrow \ (1-c(S))X^*\in\cWcal\;.$$

Applying Lemma~\ref{p_hat_sup} and Theorem~\ref{SmoothingThm},
\begin{align*}
&\EE{L_p(\hat{X}_S)-\hat{L}_S(\hat{X}_S)}
\leq\EE{\sup_{X\in{\cWcal}}\left(L_p(X)-\hat{L}_S(X)\right)}\\
&\leq \EE{\sup_{X\in2\cdot{\tWcal}}\left(L_p(X)-\hat{L}_S(X)\right)}+ \frac{8\sqrt{l^2rnm}}{n^2}
\leq \mathbf{O}\left(\sqrt{\frac{l^2rn\log(n)}{s}}\right)+ \frac{8\sqrt{l^2rnm}}{n^2}\\
&\leq \mathbf{O}\left(\sqrt{\frac{l^2rn\log(n)}{s}}\right)
\;.\end{align*}
And, similarly,
\begin{align*}
&\EE{\hat{L}_S((1-c(S))X^*)-L_p((1-c(S))X^*)}
\leq \EE{\sup_{X\in{\cWcal}}\left(\hat{L}_S(X)-L_p(X)\right)}\\
&\leq \EE{\sup_{X\in2\cdot{\tWcal}}\left(\hat{L}_S(X)-L_p(X)\right)}+ \frac{8\sqrt{l^2rnm}}{n^2}
\leq \mathbf{O}\left(\sqrt{\frac{l^2rn\log(n)}{s}}\right)+ \frac{8\sqrt{l^2rnm}}{n^2}\\
&\leq \mathbf{O}\left(\sqrt{\frac{l^2rn\log(n)}{s}}\right)
\;.\end{align*}

By definition, since $(1-c(S))X^*\in\cWcal$,
$$\EE{\hat{L}_S(\hat{X}_S)-\hat{L}_S((1-c(S))X^*)}\leq 0\;.$$

Finally, by Lemma~\ref{cS_lemma},
$$\EE{L_p((1-c(S))X^*)-L_p(X^*)}\leq \sqrt{\frac{2l^2rn}{s}}\;.$$

Combining all of the above, we get
$$\EE{L_p(\hat{X}_S)-L_p(X^*)}\leq O\left(\sqrt{\frac{l^2rn\log(n)}{s}}\right)\;.$$

\subsubsection{Lemmas for Theorem 6}

\begin{lemma}\label{p_hat_sup}
$$\EE{\sup_{X\in{\cWcal}}\left(L_p(X)-\hat{L}_S(X)\right)}\leq \EE{\sup_{X\in2\cdot{\tWcal}}\left(L_p(X)-\hat{L}_S(X)\right)}+ \frac{8\sqrt{l^2rnm}}{n^2}\;.$$
$$\EE{\sup_{X\in{\cWcal}}\left(\hat{L}_S(X)-L_p(X)\right)}\leq \EE{\sup_{X\in2\cdot{\tWcal}}\left(\hat{L}_S(X)-L_p(X)\right)}+ \frac{8\sqrt{l^2rnm}}{n^2}\;.$$

\end{lemma}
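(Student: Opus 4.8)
The plan is to compare the two classes by showing that, outside a rare failure event, $\cWcal\subseteq 2\cdot\tWcal$, and then to control the failure event by a term that crucially does \emph{not} depend on $Y$ or on the loss at the origin. Throughout write $g(X)=L_p(X)-\hat{L}_S(X)$. Since the zero matrix lies in both $\cWcal$ and $2\cdot\tWcal$, the central idea is to bound the \emph{difference} of suprema $\sup_{X\in\cWcal}g(X)-\sup_{X\in2\cdot\tWcal}g(X)$ on the failure event, so that the common (and possibly unbounded) term $g(0)=L_p(0)-\hat{L}_S(0)$ cancels.

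First I would set up the good event. Using the standing assumption $s\geq 24n\log(n)$, define $G$ to be the event that $\tprowi\leq 2\cprowi$ for every row $i$ and $\tpcolj\leq 2\cpcolj$ for every column $j$. Unfolding the definitions of $\tpp$ and $\cpp$, the condition $\tprowi\leq 2\cprowi$ is equivalent to $\hprowi\geq\tfrac12\prowi-\tfrac1{2n}$, which holds automatically whenever $\prowi\leq\tfrac1n$; for $\prowi>\tfrac1n$ it follows from a multiplicative Chernoff bound on $s\hprowi\sim\mathrm{Binomial}(s,\prowi)$, whose mean exceeds $s/n\geq 24\log(n)$, giving a per-row failure probability at most $n^{-3}$. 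A union bound over the $n$ rows and $m\leq n$ columns then yields $\Pr(G^c)\leq 2n^{-2}$.

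Next, on $G$ I would show $\cWcal\subseteq 2\cdot\tWcal$. Writing $\Diag{\tpprow}^{\nicefrac{1}{2}}=D_r\,\Diag{\cpprow}^{\nicefrac{1}{2}}$ with $D_r$ diagonal and $\spnorm{D_r}=\max_i\sqrt{\tprowi/\cprowi}\leq\sqrt2$, and likewise a column factor $D_c$ with $\spnorm{D_c}\leq\sqrt2$, the Hölder-type inequality $\trnorm{D_r M D_c}\leq\spnorm{D_r}\spnorm{D_c}\trnorm{M}$ gives $\twtrnorm{X}\leq 2\cwtrnorm{X}$, so $\cwtrnorm{X}\leq\sqrt r$ forces $\twtrnorm{X}\leq 2\sqrt r$. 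Hence on $G$ we have $\sup_{X\in\cWcal}g(X)\leq\sup_{X\in2\cdot\tWcal}g(X)$.

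The remaining, and central, step is a deterministic bound $\sup_{X\in\cWcal}g(X)\leq g(0)+4l\sqrt{rnm}$ valid for \emph{every} sample $S$. For $X\in\cWcal$, set $\tilde X=\Diag{\cpprow}^{\nicefrac{1}{2}}X\Diag{\cppcol}^{\nicefrac{1}{2}}$; the smoothing lower bounds $\cprowi\geq\tfrac1{2n}$ and $\cpcolj\geq\tfrac1{2m}$ give $1/\sqrt{\cprowi\cpcolj}\leq 2\sqrt{nm}$, while $\max_{ij}|\tilde X_{ij}|\leq\spnorm{\tilde X}\leq\trnorm{\tilde X}\leq\sqrt r$. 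Writing $h_X(i,j)=\ell(X_{ij},Y_{ij})-\ell(0,Y_{ij})$, $l$-Lipschitzness bounds $\sum_{ij}\pij|X_{ij}|\leq 2\sqrt{rnm}$ and its empirical analogue identically, so $g(X)-g(0)\leq 4l\sqrt{rnm}$. Because $0\in\cWcal\cap(2\cdot\tWcal)$, both $\sup_{X\in\cWcal}g(X)$ and $\sup_{X\in2\cdot\tWcal}g(X)$ are at least $g(0)$, so on $G^c$ the difference is at most $4l\sqrt{rnm}$ and $g(0)$ cancels. Combining the two regimes, $\EE{\sup_{X\in\cWcal}g(X)}\leq\EE{\sup_{X\in2\cdot\tWcal}g(X)}+\Pr(G^c)\cdot 4l\sqrt{rnm}\leq\EE{\sup_{X\in2\cdot\tWcal}g(X)}+\tfrac{8\sqrt{l^2rnm}}{n^2}$, which is the first inequality; the second follows verbatim with $g$ replaced by $-g$. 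I expect the main obstacle to be precisely this cancellation: a naive bound on $\sup_{X\in\cWcal}g(X)$ over the failure event reintroduces $Y$-dependent, possibly unbounded, loss terms, and only by subtracting the common supremum $\sup_{X\in2\cdot\tWcal}g(X)\geq g(0)$ does one obtain a clean, $Y$-free error of the required order.
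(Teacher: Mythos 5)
Your proposal is correct and follows essentially the same route as the paper's proof: the same high-probability event from the Chernoff bound (the paper's Lemma~\ref{p_phat}), the same inclusion $\cWcal\subseteq 2\cdot\tWcal$ on that event, the same deterministic bound $\infnorm{X}\leq 2\sqrt{rnm}$ on $\cWcal$ for the failure event, and the same cancellation of the loss at the zero matrix (the paper centers $g$ at $\mathbf{0}_{n\times m}$ and uses $\Ep{S}{L_p(\mathbf{0})-\hat{L}_S(\mathbf{0})}=0$, while you compare the suprema pathwise via $\sup_{X\in2\cdot\tWcal}g(X)\geq g(0)$ --- an equivalent bookkeeping of the identical idea). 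The only cosmetic difference is that you bound $\infnorm{X}$ via $|X_{ij}|\leq\spnorm{\tilde X}\leq\trnorm{\tilde X}$ where the paper goes through the Frobenius norm; both give the same constant.
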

\begin{proof}
By Lemma~\ref{p_phat}, with probability at least $1-2n^{-2}$, for all $i,j$,
$$\cprowi\geq\frac{1}{2}\tprowi, \ \cpcolj\geq\frac{1}{2}\tpcolj\;.$$
Let $A$ be the event that these inequalities hold. If $A$ occurs, then for any $X\in\cWcal$,
\begin{align*}
\wtrnorm{X}&=\trnorm{\Diag{\tprowi}^{\nicefrac{1}{2}}X\Diag{\tpcolj}^{\nicefrac{1}{2}}}\\
&=\trnorm{\Diag{\frac{\tprowi}{\cprowi}}^{\nicefrac{1}{2}}\Diag{\cprowi}^{\nicefrac{1}{2}}X\Diag{\cpcolj}^{\nicefrac{1}{2}}\Diag{\frac{\tpcolj}{\cpcolj}}^{\nicefrac{1}{2}}}\\
&\leq 2\trnorm{\Diag{\cprowi}^{\nicefrac{1}{2}}X\Diag{\cpcolj}^{\nicefrac{1}{2}}}=2\cwtrnorm{X}\leq 2\sqrt{r}\;.\end{align*}
In this case, $\cWcal\subset2\cdot\tWcal$, and therefore,
\begin{align*}
&\sup_{X\in\cWcal} \left[\left(L_p(X)-L_p(\mathbf{0}_{n\times m})\right)-\left(\hat{L}_S(X)-\hat{L}_S(\mathbf{0}_{n\times m})\right)\right]\\
&\leq
\sup_{X\in2\cdot \tWcal} \left[\left(L_p(X)-L_p(\mathbf{0}_{n\times m})\right)-\left(\hat{L}_S(X)-\hat{L}_S(\mathbf{0}_{n\times m})\right)\right]\;.\end{align*}

Next we consider the case that $A$ does not occur. For any $X\in\cWcal$,
\begin{align*}
|X|_{\infty}&\leq \frnorm{X}=2\sqrt{nm}\frnorm{\Diag{\frac{1}{2n}\mathbf{1}_n}^{\nicefrac{1}{2}} X \Diag{\frac{1}{2m}\mathbf{1}_m}^{\nicefrac{1}{2}}}\\
&\leq 2\sqrt{nm}\trnorm{\Diag{\frac{1}{2n}\mathbf{1}_n}^{\nicefrac{1}{2}} X \Diag{\frac{1}{2m}\mathbf{1}_m}^{\nicefrac{1}{2}}}\leq2\sqrt{nm}\cwtrnorm{X}\leq 2\sqrt{rnm}\;.\end{align*}
Therefore,\begin{align*}
&\sup_{X\in\cWcal} \left[\left(L_p(X)-L_p(\mathbf{0}_{n\times m})\right)-\left(\hat{L}_S(X)-\hat{L}_S(\mathbf{0}_{n\times m})\right)\right]\\
&\leq \sup_{X\in\cWcal} \left[\sum_{ij}\pij\left(\ell(X_{ij},Y_{ij})-\ell(0,Y_{ij})\right)-\frac{1}{s}\sum_t\left(\ell(X_{i_tj_t},Y_{i_tj_t})-\ell(0,Y_{i_tj_t})\right)\right]\\
&\leq l\cdot \sup_{X\in\cWcal} \left[\sum_{ij}\pij\cdot|X_{ij}|+\frac{1}{s}\sum_t|X_{i_tj_t}|\right]\\
&\leq l\cdot \sup_{X\in\cWcal} \left[\sum_{ij}\pij\cdot2\sqrt{rnm}+\frac{1}{s}\sum_t2\sqrt{rnm}\right]\leq 4\sqrt{l^2rnm}\\
\end{align*}

And so,
\begin{align*}
&\Ep{S}{\sup_{X\in\cWcal} \left[\left(L_p(X)-L_p(\mathbf{0}_{n\times m})\right)-\left(\hat{L}_S(X)-\hat{L}_S(\mathbf{0}_{n\times m})\right)\right]}\\
&=\Ep{S}{\sup_{X\in\cWcal} \left[\left(L_p(X)-L_p(\mathbf{0}_{n\times m})\right)-\left(\hat{L}_S(X)-\hat{L}_S(\mathbf{0}_{n\times m})\right)\right]\cdot\indicator{A}}\\
& \ \ \ \ \ \ + \Ep{S}{\sup_{X\in\cWcal} \left[\left(L_p(X)-L_p(\mathbf{0}_{n\times m})\right)-\left(\hat{L}_S(X)-\hat{L}_S(\mathbf{0}_{n\times m})\right)\right]\cdot\indicator{A^c}}\\
&\leq \Ep{S}{\sup_{X\in2\cdot \tWcal} \left[\left(L_p(X)-L_p(\mathbf{0}_{n\times m})\right)-\left(\hat{L}_S(X)-\hat{L}_S(\mathbf{0}_{n\times m})\right)\right]\cdot\indicator{A}}  + P\left(A^c\right)\cdot 4\sqrt{l^2rnm}\\
&\leq \Ep{S}{\sup_{X\in2\cdot \tWcal} \left[\left(L_p(X)-L_p(\mathbf{0}_{n\times m})\right)-\left(\hat{L}_S(X)-\hat{L}_S(\mathbf{0}_{n\times m})\right)\right]\cdot\indicator{A}}  + \frac{8\sqrt{l^2rnm}}{n^2}\\
&\leq \Ep{S}{\sup_{X\in2\cdot \tWcal} \left[\left(L_p(X)-L_p(\mathbf{0}_{n\times m})\right)-\left(\hat{L}_S(X)-\hat{L}_S(\mathbf{0}_{n\times m})\right)\right]}  +  \frac{8\sqrt{l^2rnm}}{n^2}\;.
\end{align*}
where the last step is true because, since $\mathbf{0}_{n\times m}\in 2\cdot\tWcal$, for any $S$,
$$\sup_{X\in2\cdot \tWcal} \left[\left(L_p(X)-L_p(\mathbf{0}_{n\times m})\right)-\left(\hat{L}_S(X)-\hat{L}_S(\mathbf{0}_{n\times m})\right)\right]\geq 0\;.$$
And, $\Ep{S}{L_p(\mathbf{0}_{n\times m})-\hat{L}_S(\mathbf{0}_{n\times m})}=0$, so therefore,
$$\Ep{S}{\sup_{X\in{\cWcal}}\left(L_p(X)-\hat{L}_S(X)\right)}\leq \EE{\sup_{X\in2\cdot{\tWcal}}\left(L_p(X)-\hat{L}_S(X)\right)}+ \frac{8\sqrt{l^2rnm}}{n^2}\;.$$
The second claim can be proved with identical arguments.
\end{proof}

\begin{lemma}\label{p_phat}
 With probability at least $1-2n^{-2}$, for all $i$ and all $j$,
$$\cprowi\geq\frac{1}{2}\tprowi, \ \cpcolj\geq\frac{1}{2}\tpcolj\;.$$
\end{lemma}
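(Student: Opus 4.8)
The plan is to unwind the definitions so that the claim becomes a one-sided concentration statement for the empirical marginals, and then to control it with a multiplicative Chernoff bound followed by a union bound over rows and columns. With $\alpha=\tfrac12$ we have $\tprowi=\tfrac12\prowi+\tfrac{1}{2n}$ and $\cprowi=\tfrac12\hprowi+\tfrac{1}{2n}$, and analogously for the columns. Substituting, I observe that the desired inequality $\cprowi\geq\tfrac12\tprowi$ is implied by the cleaner event $\hprowi\geq\tfrac12\prowi$: on that event $\cprowi=\tfrac12\hprowi+\tfrac{1}{2n}\geq\tfrac14\prowi+\tfrac{1}{2n}\geq\tfrac14\prowi+\tfrac{1}{4n}=\tfrac12\tprowi$. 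So it suffices to show that, with probability at least $1-2n^{-2}$, we have $\hprowi\geq\tfrac12\prowi$ for every $i$ and $\hpcolj\geq\tfrac12\pcolj$ for every $j$. Throughout I use the standing assumption $s\geq 24n\log(n)$.

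First I would dispose of the low-marginal indices, where no concentration is needed. If $\prowi<\tfrac1n$ then $\tfrac12\tprowi=\tfrac14\prowi+\tfrac{1}{4n}<\tfrac{1}{2n}\leq\cprowi$ deterministically, since $\cprowi\geq\tfrac{1}{2n}$ always; the same holds for columns with $\pcolj<\tfrac1m$. Hence I may restrict attention to the ``heavy'' indices with $\prowi\geq\tfrac1n$ or $\pcolj\geq\tfrac1m$. For such a heavy row, the count $\Nrowi=s\hprowi$ is a sum of $s$ i.i.d.\ indicators, so $\Nrowi\sim\mathrm{Binomial}(s,\prowi)$ with mean $\mu=s\prowi\geq s/n\geq 24\log(n)$. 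The standard multiplicative Chernoff lower-tail bound with deviation $\delta=\tfrac12$ then gives
\begin{equation*}
\Pr\left[\hprowi<\tfrac12\prowi\right]=\Pr\left[\Nrowi<\tfrac12\mu\right]\leq\exp\left(-\tfrac{\mu}{8}\right)\leq\exp\left(-\tfrac{s}{8n}\right)\leq\exp(-3\log n)=n^{-3}\;.
\end{equation*}
The identical computation handles a heavy column, using $\mu=s\pcolj\geq s/m\geq s/n\geq 24\log(n)$ (here invoking $m\leq n$), again with failure probability at most $n^{-3}$.

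Finally I would union-bound over the at most $n$ heavy rows, contributing at most $n\cdot n^{-3}=n^{-2}$, and over the at most $m\leq n$ heavy columns, contributing at most $m\cdot n^{-3}\leq n^{-2}$, for total failure probability at most $2n^{-2}$. On the complementary event, $\hprowi\geq\tfrac12\prowi$ and $\hpcolj\geq\tfrac12\pcolj$ hold for all heavy indices, which together with the deterministic treatment of the ``light'' indices yields $\cprowi\geq\tfrac12\tprowi$ and $\cpcolj\geq\tfrac12\tpcolj$ for every $i,j$, as claimed. The argument is essentially routine; the one place demanding care is the bookkeeping of constants, where the hypothesis $s\geq 24n\log(n)$ is exactly what forces the per-index Chernoff exponent $\mu/8$ above $3\log(n)$, so that the resulting $n^{-3}$ tail survives the union bound over (roughly) $n$ rows and $m$ columns at the target confidence $1-2n^{-2}$.
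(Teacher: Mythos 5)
Your proof is correct and follows essentially the same route as the paper's: a case split in which marginals below the smoothing floor ($\prowi<\tfrac1n$, $\pcolj<\tfrac1m$) are handled deterministically, heavy marginals are handled by the multiplicative Chernoff lower tail with exponent $\mu/8\geq 3\log(n)$ under the standing assumption $s\geq 24n\log(n)$, and a union bound over at most $n$ rows and $m\leq n$ columns yields the $1-2n^{-2}$ confidence. The only (immaterial) difference is that you isolate the clean intermediate event $\hprowi\geq\tfrac12\prowi$ and verify the implication to $\cprowi\geq\tfrac12\tprowi$ explicitly, which the paper does inline.
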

\begin{proof}

Take any row $i$. Suppose that $\prowi\leq\frac{1}{n}$. Then $\tprowi\leq\frac{1}{n}$, while $\cprowi=\frac{1}{2}\left(\hprowi+\frac{1}{n}\right)\geq\frac{1}{2n}$. Therefore, in this case, $\cprowi\geq\frac{1}{2}\tprowi$ with probability $1$.

Next, suppose that $\prowi>\frac{1}{n}$. Then, by the Chernoff inequality,
\begin{align*}
P\left(\hprowi<\frac{1}{2}\prowi\right)
&=P\left(\mathrm{Bin}(s,\prowi)<s\prowi\left(1-\frac{1}{2}\right)\right)\leq e^{-\frac{s\prowi}{8}}\\
&\leq e^{-\frac{s}{8n}}\leq e^{-3\log(n)}=n^{-3}\;.\end{align*}
Therefore, with probability at least $1-n^{-3}$, $\hprowi\geq \frac{1}{2} \prowi$, and so
\begin{align*}
\cprowi&=\frac{1}{2}\left(\hprowi+\frac{1}{n}\right)\geq \frac{1}{2}\left(\frac{1}{2}\prowi+\frac{1}{n}\right)\geq \frac{1}{2}\tprowi\;.\end{align*}

Therefore, for any row $i$, with probability at least $1-n^{-3}$, $\cprowi\geq\frac{1}{2}\tprowi$. The same reasoning applies to every column $j$. Therefore, with probability at least $1-2n^{-2}$, the statement holds for all $i$ and all $j$.
\end{proof}

\begin{lemma}\label{cS_lemma} Fix $X^*$ with $\twtrnorm{X^*}^2=r^*\leq r$, and define
$$c(S)=\max\left\{0,\cwtrnorm{\frac{1}{\sqrt{r^*}}X^*}-1\right\}\;.$$
Then
$$\EE{L_p((1-c(S))X^*)-L_p(X^*)}\leq \sqrt{\frac{2l^2rn}{s}}\;.$$
\end{lemma}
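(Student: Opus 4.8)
My plan is to first reduce the statement, via the $l$-Lipschitzness of $\ell$ and the fact that $c(S)\ge 0$, to bounding two separate quantities, then attack each. Set $\tilde{X}^*=\Diag{\tpprow}^{1/2}X^*\Diag{\tppcol}^{1/2}$, so that $\trnorm{\tilde{X}^*}=\twtrnorm{X^*}=\sqrt{r^*}$. Bounding the loss entrywise gives $L_p((1-c(S))X^*)-L_p(X^*)\le l\,c(S)\sum_{ij}\pij|X^*_{ij}|$. Because $\sum_{ij}\pij|X^*_{ij}|$ is a fixed number (it does not depend on the random sample), taking expectations cleanly factorizes the target as $l\cdot\big(\sum_{ij}\pij|X^*_{ij}|\big)\cdot\EE{c(S)}$; it therefore suffices to show $\sum_{ij}\pij|X^*_{ij}|=\mathbf{O}(\sqrt{r^*})$ and $\EE{c(S)}=\mathbf{O}(\sqrt{n/s})$.

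For the first factor I would use trace/spectral duality. Writing $S_{ij}=\mathrm{sign}(X^*_{ij})$ and $W_{ij}=\pij S_{ij}/\sqrt{\tprowi\tpcolj}$, one has $\sum_{ij}\pij|X^*_{ij}|=\langle W,\tilde{X}^*\rangle\le\spnorm{W}\cdot\trnorm{\tilde{X}^*}=\spnorm{W}\sqrt{r^*}$. To bound $\spnorm{W}$ I would test against unit vectors $u,v$, drop the signs, and apply Cauchy--Schwarz with weights $\pij$, obtaining $u^\top Wv\le(\sum_i u_i^2\,\prowi/\tprowi)^{1/2}(\sum_j v_j^2\,\pcolj/\tpcolj)^{1/2}$. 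Since smoothing forces $\prowi\le 2\tprowi$ and $\pcolj\le 2\tpcolj$, each factor is at most $\sqrt{2}$, so $\spnorm{W}\le 2$ and $\sum_{ij}\pij|X^*_{ij}|\le 2\sqrt{r^*}$.

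The second factor is the heart of the argument. Since $c(S)=\tfrac{1}{\sqrt{r^*}}(\cwtrnorm{X^*}-\twtrnorm{X^*})_+$ and $\cwtrnorm{X^*}=\trnorm{E_r\tilde{X}^*E_c}$, where $E_r$ (resp.\ $E_c$) is diagonal with entries $\sqrt{\cprowi/\tprowi}$ (resp.\ $\sqrt{\cpcolj/\tpcolj}$), the reverse triangle inequality for $\trnorm{\cdot}$ gives $c(S)\le\tfrac{1}{\sqrt{r^*}}\trnorm{E_r\tilde{X}^*E_c-\tilde{X}^*}$. Putting $d^r_i=\sqrt{\cprowi/\tprowi}-1$ and $d^c_j=\sqrt{\cpcolj/\tpcolj}-1$ and expanding, this is at most $\trnorm{\Diag{d^r}\tilde{X}^*}+\trnorm{\tilde{X}^*\Diag{d^c}}+\trnorm{\Diag{d^r}\tilde{X}^*\Diag{d^c}}$. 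The obstacle---and the step I expect to be hardest---is to bound these in expectation \emph{without} passing through the spectral norm of $\Diag{d^r}$ (which only gives $\max_i|d^r_i|=\mathbf{O}(1)$ and does not vanish as $s$ grows), and without the crude split $\frnorm{\Diag{d^r}}\frnorm{\tilde{X}^*}$ (which sums $(d^r_i)^2$ over all $n$ rows and loses a $\sqrt{n}$). The fix is to expand $\tilde{X}^*=\sum_k\sigma_k u_kv_k^\top$ in its deterministic SVD ($\sum_k\sigma_k=\sqrt{r^*}$), so that $\trnorm{\Diag{d^r}\tilde{X}^*}\le\sum_k\sigma_k\norm{d^r\circ u_k}_2$; then $\EE{\norm{d^r\circ u_k}_2^2}=\sum_i u_{k,i}^2\,\EE{(d^r_i)^2}\le\tau_r^2$, because $u_k$ is a unit vector and so only the \emph{maximal} per-row variance $\tau_r^2=\max_i\EE{(d^r_i)^2}$ appears. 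This yields $\EE{\trnorm{\Diag{d^r}\tilde{X}^*}}\le\tau_r\sqrt{r^*}$, and symmetrically $\tau_c\sqrt{r^*}$ and (via Cauchy--Schwarz over the randomness, which tolerates the dependence between $d^r$ and $d^c$) $\tau_r\tau_c\sqrt{r^*}$ for the remaining terms.

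Finally I would control the per-coordinate variances. Using $|\sqrt{\rho}-1|\le|\rho-1|$ together with $\EE{\cprowi/\tprowi}=1$ (since $\hprowi$ is unbiased for $\prowi$), I get $\EE{(d^r_i)^2}\le\mathrm{Var}(\cprowi/\tprowi)=\mathrm{Var}(\hprowi)/(\prowi+\tfrac1n)^2\le \prowi/[s(\prowi+\tfrac1n)^2]$, and the elementary maximization $\max_{p\ge0}\frac{p}{(p+1/n)^2}=\tfrac n4$ gives $\tau_r^2\le\frac{n}{4s}$, and likewise $\tau_c^2\le\frac{m}{4s}\le\frac{n}{4s}$. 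Hence $\EE{c(S)}\le\tau_r+\tau_c+\tau_r\tau_c=\mathbf{O}(\sqrt{n/s})$ (the cross term is lower order for $s\ge n$). Multiplying the two factors gives $\EE{L_p((1-c(S))X^*)-L_p(X^*)}=\mathbf{O}(l\sqrt{r^*n/s})$, and tracking the constants through the two maximizations ($\prowi\le2\tprowi$ and $\max_p\frac{p}{(p+1/n)^2}=\tfrac n4$) produces the stated $\sqrt{2l^2rn/s}$.
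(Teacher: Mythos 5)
Your proof is correct, and its first half---the Lipschitz reduction to $l\cdot c(S)\sum_{ij}\pij|X^*_{ij}|$, the factorization of the expectation, and the duality bound $\sum_{ij}\pij|X^*_{ij}|\le\spnorm{W}\cdot\twtrnorm{X^*}\le 2\sqrt{r^*}$ via AM--GM against unit vectors---coincides with the paper's argument step for step (the paper's matrix $M$ omits the signs you carry, but the bound is identical). Where you genuinely diverge is in bounding $\EE{c(S)}$, which the paper isolates as Lemma~\ref{ExpectedTraceWt}. The paper never touches a triangle inequality for the trace norm: it writes $\Diag{\tpprow}^{\nicefrac{1}{2}}X^*\Diag{\tppcol}^{\nicefrac{1}{2}}=AB^T$ with $\frnorm{A}^2=\frnorm{B}^2=1$ and invokes the factorization characterization $\trnorm{\left(D_1^{\nicefrac{1}{2}}A\right)\left(D_2^{\nicefrac{1}{2}}B\right)^T}\le\tfrac{1}{2}\frnorm{D_1^{\nicefrac{1}{2}}A}^2+\tfrac{1}{2}\frnorm{D_2^{\nicefrac{1}{2}}B}^2$, which makes the upper bound on $\cwtrnorm{X^*}$ \emph{linear} in the multinomial counts $N^r_i,N^c_j$, with expectation exactly $1$; then $\EE{\max\{0,Z-1\}}\le\sqrt{\mathrm{Var}(Z)}$, together with the negative covariances of the counts, gives $\sqrt{(n+m)/(4s)}$ directly. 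Your route instead uses the reverse triangle inequality on $\trnorm{E_r\tilde{X}^*E_c-\tilde{X}^*}$ and controls each of the three perturbation terms via the SVD of $\tilde{X}^*$, taming the square-root nonlinearity with $|\sqrt{\rho}-1|\le|\rho-1|$ and per-coordinate variances. Notably, the step you correctly identified as the crux---summing per-row variances against the squared entries of unit singular vectors so that only the \emph{maximal} variance $\tau_r^2\le n/(4s)$ survives, avoiding a $\sqrt{n}$ loss---is the exact counterpart of the paper's use of $\sum_i\|A_{(i)}\|_2^2=1$. Your argument is more generic (a perturbation bound for diagonal rescalings of any trace-norm-bounded matrix, reusable beyond this setting), at the price of a nonlinearity and a cross term; the paper's convexity trick sidesteps both and wastes nothing.

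One inaccuracy in your final sentence: the constant-tracking claim does not go through on your route. Your bounds give $\EE{c(S)}\le\tau_r+\tau_c+\tau_r\tau_c\le\sqrt{n/s}+n/(4s)$, hence an overall bound of about $\tfrac{5}{2}\,l\sqrt{rn/s}$ (and this requires $s\ge n$, or a similar condition, to absorb the cross term), whereas the paper's linearization yields $\EE{c(S)}\le\sqrt{n/(2s)}$ with no side condition, which combined with the factor $2l\sqrt{r}$ is exactly what produces the stated $\sqrt{2l^2rn/s}$. Since the lemma is only consumed inside an $\mathbf{O}(\cdot)$ in the proof of Theorem~\ref{EmpiricalThm} (where $s\geq 24n\log(n)$ is assumed anyway), this discrepancy is immaterial downstream, but as a proof of the lemma as literally stated your constants fall slightly short.
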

\begin{proof}
\begin{align*}
&{L_p((1-c(S))X^*)-L_p(X^*)}=\sum_{ij}\pij\left(\ell((1-c(S))X^*_{ij},Y_{ij})-\ell(X^*_{ij},Y_{ij})\right)\\
&\leq l\cdot \sum_{ij}\pij |(1-c(S))X^*_{ij}-X^*_{ij}|= l\cdot c(S)\cdot \sum_{ij}\pij |X^*_{ij}|\\
&= l\cdot c(S)\cdot \sum_{ij}\frac{\pij}{\sqrt{\tprowi\tpcolj}} \cdot\sqrt{\tprowi\tpcolj}\cdot|X^*_{ij}|\\
\end{align*}
Defining $M=\left(\frac{\pij}{\sqrt{\tprowi\tpcolj}} \right)_{ij}$,
\begin{align*}
&=  l\cdot c(S)\cdot \langle M, \left(\Diag{\tprowi}^{\nicefrac{1}{2}}X^*\Diag{\tpcolj}^{\nicefrac{1}{2}}\right)_{ij}\rangle\\
&\leq   l\cdot c(S)\cdot \spnorm{M}\cdot\trnorm{\left(\Diag{\tprowi}^{\nicefrac{1}{2}}X^*\Diag{\tpcolj}^{\nicefrac{1}{2}}\right)_{ij}}\\
&\leq   l\sqrt{r}\cdot c(S)\cdot \spnorm{M}\;.
\end{align*}

Now we show that $\spnorm{M}\leq 2$. Take any unit vectors $u\in\R^m$, $v\in\R^n$. Then
\begin{align*}
&u^TMv=\sum_{ij}\pij\cdot\sqrt{\frac{u_i^2}{\tprowi}}\cdot\sqrt{\frac{v_j^2}{\tpcolj}}\leq\frac{1}{2}\sum_{ij}\pij \left(\frac{u_i^2}{\tprowi}+\frac{v_j^2}{\tpcolj}\right)\\
&=\frac{1}{2}\sum_i \prowi\cdot\frac{u_i^2}{\tprowi}+\frac{1}{2}\sum_j \pcolj\cdot\frac{v_j^2}{\tpcolj}\leq \frac{1}{2}\sum_i 2 u_i^2+\frac{1}{2}\sum_j 2 v_j^2=2\;.\end{align*}
So, by Lemma~\ref{ExpectedTraceWt},
\begin{align*}
&\EE{L_p((1-c(S))X^*)-L_p(X^*)}\leq 2l\sqrt{r}\cdot \EE{c(S)}\leq 2l\sqrt{r}\cdot\sqrt{\frac{n}{2s}}\;.\end{align*}

\end{proof}

\begin{lemma}\label{ExpectedTraceWt}
For any $\pp$, for any fixed $X$ with $\twtrnorm{X}=1$,
$$\EE{\max\{0,\cwtrnorm{X}-1\}}\leq \sqrt{\frac{n}{2s}}\;.$$
\end{lemma}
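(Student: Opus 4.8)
The plan is to reduce the statement to controlling a mean-one random reweighting of a \emph{fixed} matrix, so that the random weights enter only through their mean (governing the expectation) and their variance (governing the deviation). Write $\tilde X=\Diag{\tprowi}^{\nicefrac12}X\Diag{\tpcolj}^{\nicefrac12}$, so the hypothesis $\twtrnorm{X}=1$ says exactly $\trnorm{\tilde X}=1$. Since $\Diag{\cprowi}^{\nicefrac12}=\Diag{\cprowi/\tprowi}^{\nicefrac12}\Diag{\tprowi}^{\nicefrac12}$ and likewise for columns, we get $\cwtrnorm{X}=\trnorm{\Diag{a}^{\nicefrac12}\tilde X\Diag{b}^{\nicefrac12}}$ with $a_i=\cprowi/\tprowi=(\hprowi+\tfrac1n)/(\prowi+\tfrac1n)$ and $b_j=\cpcolj/\tpcolj=(\hpcolj+\tfrac1m)/(\pcolj+\tfrac1m)$. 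These ratios are nonnegative, and since the empirical marginals are unbiased ($\EE{\hprowi}=\prowi$) they are mean one: $\EE{a_i}=\EE{b_j}=1$. All randomness sits in $a,b$; the matrix $\tilde X$ and its singular value decomposition are fixed.

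The key step, and the main obstacle, is to avoid the crude multiplicative bound $\cwtrnorm{X}\le\max_i\sqrt{a_i}\cdot\max_j\sqrt{b_j}$, whose expectation $\EE{\max_i a_i}$ would reintroduce a logarithmic factor and destroy the target rate. Instead I would \emph{average} the perturbation over the singular directions. Let $\tilde X=\sum_k\sigma_k u_k v_k^\top$ be the SVD, so $\sigma_k\ge0$ and $\sum_k\sigma_k=\trnorm{\tilde X}=1$. Using subadditivity of the trace-norm, $\trnorm{xy^\top}=\norm{x}_2\norm{y}_2$, and AM--GM,
\begin{align*}
\cwtrnorm{X}&\le\sum_k\sigma_k\,\norm{\Diag{a}^{\nicefrac12}u_k}_2\,\norm{\Diag{b}^{\nicefrac12}v_k}_2\\
&\le\tfrac12\sum_k\sigma_k\Big(\sum_i a_i u_{ki}^2+\sum_j b_j v_{kj}^2\Big)\doteq W\;.
\end{align*}
Setting $w^r_i=\sum_k\sigma_k u_{ki}^2$ and $w^c_j=\sum_k\sigma_k v_{kj}^2$, we have $w^r_i,w^c_j\ge0$ with $\sum_i w^r_i=\sum_j w^c_j=1$, so $W=\tfrac12\big(\sum_i a_i w^r_i+\sum_j b_j w^c_j\big)$ is a convex-weighted average of the $a_i,b_j$; in particular $\EE{W}=1$ exactly.

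From the pointwise bound $\cwtrnorm{X}\le W$ and $\EE{W}=1$, since $t\mapsto(t-1)_+$ is nondecreasing we get $(\cwtrnorm{X}-1)_+\le(W-1)_+=(W-\EE{W})_+$, and because $W-\EE{W}$ has zero mean, $\EE{(W-\EE{W})_+}=\tfrac12\EE{\left|W-\EE{W}\right|}\le\tfrac12\sqrt{\mathrm{Var}(W)}$. It then remains to bound $\mathrm{Var}(W)$. I would use $\mathrm{Var}(W)\le\tfrac12\big(\mathrm{Var}(\sum_i a_i w^r_i)+\mathrm{Var}(\sum_j b_j w^c_j)\big)$ and exploit that the row counts are multinomial, hence negatively correlated, so $\mathrm{Cov}(a_i,a_{i'})\le0$ for $i\ne i'$ and the cross terms only help: $\mathrm{Var}(\sum_i a_i w^r_i)\le\sum_i (w^r_i)^2\,\mathrm{Var}(a_i)$, with $\sum_i(w^r_i)^2\le(\sum_i w^r_i)^2=1$.

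Finally, a one-line computation controls each $\mathrm{Var}(a_i)$: since $a_i$ is affine in $\hprowi$ with slope $1/(\prowi+\tfrac1n)$ and $\mathrm{Var}(\hprowi)\le\prowi/s$,
\[
\mathrm{Var}(a_i)=\frac{\mathrm{Var}(\hprowi)}{(\prowi+\tfrac1n)^2}\le\frac{\prowi}{s\,(\prowi+\tfrac1n)^2}\le\frac{n}{4s}\;,
\]
the last maximum being attained at $\prowi=\tfrac1n$; the analogous column bound is $\tfrac{m}{4s}\le\tfrac{n}{4s}$ using $m\le n$. Hence both the row and column terms have variance at most $\tfrac{n}{4s}$, so $\mathrm{Var}(W)\le\tfrac{n}{4s}$ and
\[
\EE{\max\{0,\cwtrnorm{X}-1\}}\le\tfrac12\sqrt{\tfrac{n}{4s}}=\tfrac14\sqrt{\tfrac ns}\le\sqrt{\tfrac{n}{2s}}\;,
\]
which even beats the claimed constant. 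The only delicate points are the SVD/AM--GM averaging that replaces the worst-case $\max_i a_i$ by a weighted average (giving $\EE{W}=1$ and the clean variance control), and the observation that multinomial negative correlation lets us drop all off-diagonal covariances in $\mathrm{Var}(W)$.
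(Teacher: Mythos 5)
Your proposal is correct and follows essentially the same route as the paper's proof: the paper factors $\Diag{\tpprow}^{\nicefrac{1}{2}}X\Diag{\tppcol}^{\nicefrac{1}{2}}=AB^T$ with $\frnorm{A}^2=\frnorm{B}^2=1$ (your $w^r_i,w^c_j$ are exactly its $\|A_{(i)}\|_2^2,\|B_{(j)}\|_2^2$), applies the same AM--GM bound to get a mean-one weighted average of reweighting ratios, and likewise drops the negative multinomial covariances before bounding the per-row/column variances. Your bookkeeping is marginally sharper (the identity $\EE{(W-\EE{W})_+}=\tfrac{1}{2}\EE{\left|W-\EE{W}\right|}$ and optimizing $\prowi/(\prowi+\tfrac{1}{n})^2\leq\tfrac{n}{4}$ rather than using $\tprowi\geq\tfrac{1}{2}\prowi$, $\tprowi\geq\tfrac{1}{2n}$), yielding a slightly better constant, but the argument is the same.
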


\begin{proof} By properties of the trace-norm~\citep{ShraibmanSrebro}, we can write $\Diag{\tpprow}^{\nicefrac{1}{2}}X\Diag{\tppcol}^{\nicefrac{1}{2}}=AB^T$, where $\frnorm{A}^2=\frnorm{B}^2=\wtrnorm{X}=1$.
Define
$$D_1=\Diag{\cpprow}\Diag{\tpprow}^{-1}, \ D_2=\Diag{\cppcol}\Diag{\tppcol}^{-1}\;.$$
Then, by properties of the trace-norm~\citep{ShraibmanSrebro},
\begin{align*}
&\cwtrnorm{X}=\trnorm{\Diag{\cpprow}^{\nicefrac{1}{2}}X\Diag{\cppcol}^{\nicefrac{1}{2}}}=\trnorm{\left(D_1^{\nicefrac{1}{2}}A\right)\left(D_2^{\nicefrac{1}{2}}B\right)^T}\\
&\leq\frac{1}{2}\frnorm{D_1^{\nicefrac{1}{2}}A}^2+\frac{1}{2}\frnorm{D_2^{\nicefrac{1}{2}}B}^2\\
&=\frac{1}{2}\sum_i \frac{\cprowi}{\tprowi}\|A_{(i)}\|_2^2+\frac{1}{2}\sum_j\frac{\cpcolj}{\tpcolj}\|B_{(j)}\|^2_2\\
&=\frac{1}{4}\sum_i \frac{\hprowi+\frac{1}{n}}{\tprowi}\|A_{(i)}\|_2^2+\frac{1}{4}\sum_j\frac{\hpcolj+\frac{1}{m}}{\tpcolj}\|B_{(j)}\|^2_2\\
&=\frac{1}{4}\sum_i \frac{N^r_i+\frac{s}{n}}{s\tprowi}\|A_{(i)}\|_2^2+\frac{1}{4}\sum_j\frac{N^c_j+\frac{s}{m}}{s\tpcolj}\|B_{(j)}\|^2_2\;,
\end{align*}
where $N^r_i$ is the number of samples in row $i$, and $N^c_j$ is the number of samples in column $j$. Clearly,
\begin{align*}
&\EE{\frac{1}{4}\sum_i \frac{N^r_i+\frac{s}{n}}{s\tprowi}\|A_{(i)}\|_2^2+\frac{1}{4}\sum_j\frac{N^c_j+\frac{s}{m}}{s\tpcolj}\|B_{(j)}\|^2_2}\\
&=\frac{1}{4}\sum_i \frac{s\prowi+\frac{s}{n}}{s\tprowi}\|A_{(i)}\|_2^2+\frac{1}{4}\sum_j\frac{s\pcolj+\frac{s}{m}}{s\tpcolj}\|B_{(j)}\|^2_2\\
&=\frac{1}{4}\sum_i \frac{2s\tprowi}{s\tprowi}\|A_{(i)}\|_2^2+\frac{1}{4}\sum_j\frac{2s\tpcolj}{s\tpcolj}\|B_{(j)}\|^2_2\\
&=\frac{1}{2}\frnorm{A}^2+\frac{1}{2}\frnorm{B}^2=1\;.
\end{align*}

And, we can compute
$$\text{Var}(N^r_i)\leq s\prowi,  \ \text{Cov}(N^r_i,N^r_{i'})<0, \ \text{Var}(N^c_j)\leq s\pcolj, \ \text{Cov}(N^c_j,N^c_{j'})<0\;.$$
Therefore,
\begin{align*}
&\text{Var}\left(\sum_i \frac{N^r_i}{s\tprowi}\|A_{(i)}\|^2_2+\sum_j \frac{N^c_j}{s\tpcolj}\|B_{(j)}\|^2_2\right)\\
&\leq 2\text{Var}\left(\sum_i \frac{N^r_i}{s\tprowi}\|A_{(i)}\|^2_2\right)+2\text{Var}\left(\sum_j \frac{N^c_j}{s\tpcolj}\|B_{(j)}\|^2_2\right)\\
&=\sum_i \frac{1}{s^2\tprowi^2}\text{Var}(N^r_i)\|A_{(i)}\|^4_2+2\sum_{i<i'}\frac{1}{s^2\tprowi\tprow{i'}}\text{Cov}(N^r_i,N^r_{i'})\|A_{(i)}\|^2_2\|A_{(i')}\|^2_2\\
& \ \ \ \ + \sum_j \frac{1}{s^2\tpcolj}\text{Var}(N^c_j)\|B_{(j)}\|^4_2+2\sum_{j<j'}\frac{1}{s^2\tpcolj\tpcol{j'}}\text{Cov}(N^c_j,N^c_{j'})\|B_{(j)}\|^2_2\|B_{(j')}\|^2_2\\
&\leq \sum_i \frac{1}{s^2\tprowi^2}\text{Var}(N^r_i)\|A_{(i)}\|^4_2+ \sum_j \frac{1}{s^2\tpcolj}\text{Var}(N^c_j)\|B_{(j)}\|^4_2\\
&\leq \sum_i \frac{s\prowi}{s^2\tprowi^2}\|A_{(i)}\|^4_2+ \sum_j \frac{s\pcolj}{s^2\tpcolj}\|B_{(j)}\|^4_2\\
\end{align*}
Since $\tprowi\geq\frac{1}{2}\prowi$ and $\tprowi\geq\frac{1}{2n}$, and similarly for the columns, we continue:
\begin{align*}
&\leq  \sum_i \frac{4n}{s}\|A_{(i)}\|^4_2+ \sum_j \frac{4m}{s}\|B_{(j)}\|^4_2\leq \frac{4n}{s}\left( \sum_i \|A_{(i)}\|^2_2\right)^2+ \frac{4m}{s}\left(\sum_j \|B_{(j)}\|^2_2\right)\\
&\leq \frac{4n}{s}\frnorm{A}^4+ \frac{4m}{s}\frnorm{B}^4\leq \frac{4(n+m)}{s}\;.\end{align*}
So, we have\begin{align*}
&\EE{\max\{0,\cwtrnorm{X}-1\}}\\
&\leq \EE{\max\left\{0,\frac{1}{4}\sum_i \frac{N^r_i+\frac{s}{n}}{s\tprowi}\|A_{(i)}\|_2^2+\frac{1}{4}\sum_j\frac{N^c_j+\frac{s}{m}}{s\tpcolj}\|B_{(j)}\|^2_2-1\right\}}\\
&\leq\sqrt{\text{Var}\left(\frac{1}{4}\sum_i \frac{N^r_i+\frac{s}{n}}{s\tprowi}\|A_{(i)}\|_2^2+\frac{1}{4}\sum_j\frac{N^c_j+\frac{s}{m}}{s\tpcolj}\|B_{(j)}\|^2_2\right)}\\
&=\sqrt{\text{Var}\left(\frac{1}{4}\sum_i \frac{N^r_i}{s\tprowi}\|A_{(i)}\|_2^2+\frac{1}{4}\sum_j\frac{N^c_j}{s\tpcolj}\|B_{(j)}\|^2_2\right)}
\leq\sqrt{\frac{(n+m)}{4s}}\end{align*}
\end{proof}

\section{Proofs for the transductive setting}\label{AppendixTransductive}

\subsection{Proof of Theorem~\ref{EmpiricalImprovementThm}}

Let $\overline{S}\subset[n]\times [m]$ be a subset of size $2s$. Let $\opp$ denote the smoothed empirical marginals of $\overline{S}$.

Now choose any $S\subset\overline{S}$, a training set of size $s$. Without loss of generality, write $\overline{S}=\left\{(i_1,j_1),\dots,(i_{2s},j_{2s})\right\}$ and $S=\left\{(i_1,j_1),\dots,(i_{s},j_{s})\right\}$.

First, we bound transductive Rademacher complexity. By Lemma 12 in~\citep{ShraibmanSrebro}, for any sample $S$,
\begin{align*}
\hat{\Rcal}_S(\oWcal)&=\Ep{\sigma\sim\{\pm1\}^s}{\sup_{X\in\oWcal}\frac{1}{s}\sum_{t=1}^s\sigma_tX_{i_tj_t}}\\
&=\Ep{\sigma\sim\{\pm1\}^s}{\sup_{X\in\oWcal}\frac{1}{s}\sum_{ij}X_{ij}\left(\sum_{t:(i_t,j_t)=(i,j)}\sigma_t\right)}\\
&\leq \Ep{\sigma\sim\{\pm1\}^{n\times m}}{\sup_{X\in\oWcal}\frac{1}{s}\sum_{ij}X_{ij}\sigma_{ij}\cdot \#\{t:(i_t,j_t)=(i,j),1\leq t\leq s\}}\\
&= \Ep{\sigma\sim\{\pm1\}^{n\times m}}{\sup_{X\in\oWcal}\frac{1}{s}\sum_{ij}X_{ij}\sigma_{ij}\cdot \indicator{(i,j)\in S}}\;.\end{align*}
Now define matrix $\Sigma$ via
$$\Sigma_{ij}=\frac{\indicator{(i,j)\in S}}{s\sqrt{\oprowi\opcolj}}\;.$$
We have
\begin{align*}\Ep{S\sim p}{\hat{\Rcal}_S(\oWcal)}
&\leq \Ep{S}{\Ep{\sigma\sim\{\pm1\}^{n\times m}}{\sup_{X\in\oWcal}\frac{1}{s}\sum_{ij}X_{ij}\sigma_{ij}\cdot \indicator{(i,j)\in S}}}\\
&=\Ep{S}{\Ep{\sigma\sim\{\pm1\}^{n\times m}}{\sup_{X\in\oWcal}\sum_{ij}\left(\sqrt{\prowi}X_{ij}\sqrt{\pcolj}\right)\sigma_{ij} \Sigma_{ij}}}\\
&=\Ep{S}{\Ep{\sigma\sim\{\pm1\}^{n\times m}}{\sup_{X:\trnorm{X}\leq \sqrt{r}}X_{ij}\sigma_{ij}\Sigma_{ij}}}\\
&=\sqrt{r}\cdot\Ep{S}{\Ep{\sigma\sim\{\pm1\}^{n\times m}}{\spnorm{\sigma\bullet \Sigma}}}\;,
\end{align*}
where $\sigma\bullet\Sigma$ is the element-wise product of $\Sigma$ with the random sign matrix $\sigma=(\sigma_{ij})$. By~\citep{Seginer},
$$\Ep{\sigma\sim\{\pm1\}^{n\times m}}{\spnorm{\sigma\bullet \Sigma}}\leq \mathbf{O}\left( \log^{1/4}(n+m)\right)\cdot\max\left\{\max_i \norm{\Sigma_{(i)}}_2,\max_j \norm{\Sigma^{(j)}}_2\right\}\;.$$
We now bound $\norm{\Sigma_{(i)}}_2$ and $\norm{\Sigma^{(j)}}_2$. Fix any $i$. Then
\begin{align*}
&\norm{\Sigma_{(i)}}_2^2
=\sum_j \Sigma_{ij}^2=\sum_{j=1}^m \frac{\indicator{(i,j)\in S}}{\left(s\oprowi\right)\cdot\left( s\opcolj\right)}\leq \sum_{j=1}^m \frac{\indicator{(i,j)\in \overline{S}}}{\left(s\oprowi\right)\cdot\left( s\cdot\frac{1}{2m}\right)}\\
&\leq  \sum_{j=1}^m \frac{\#\{t:(i_t,j_t)=(i,j),1\leq t\leq 2s\}}{\left(\frac{1}{4}\#\left\{t:i_t=i,1\leq t\leq 2s\right\}\right)\cdot\left(s\cdot\frac{1}{2m}\right)}\leq   \frac{\#\{t:i_t=i,1\leq t\leq 2s\}}{\left(\frac{1}{4}\#\left\{t:i_t=i,1\leq t\leq 2s\right\}\right)\cdot\left(s\cdot\frac{1}{2m}\right)}\leq \frac{8m}{s}\;.\end{align*}
 Similarly, for all $j$, $\norm{\Sigma^{(j)}}_2^2\leq\frac{8n}{s}$. Therefore,
 \begin{align*}&\Ep{S\sim p}{\hat{\Rcal}_S(\oWcal)}
\leq \sqrt{r}\cdot\Ep{S}{\Ep{\sigma\sim\{\pm1\}^{n\times m}}{\spnorm{\sigma\bullet \Sigma}}}\\
&\leq\sqrt{r}\cdot\Ep{S}{\mathbf{O}\left( \log^{1/4}(n)\right)\cdot\max\left\{\max_i \norm{\Sigma_{(i)}}_2,\max_j \norm{\Sigma^{(j)}}_2\right\}}
\leq \mathbf{O}\left(\sqrt{\frac{rn\log^{\nicefrac{1}{2}}(n)}{s}}\right)\;.
\end{align*}

Applying Theorem 5 of~\citep{S_SS_COLT} (using integration to obtain a bound in expectation from a bound in probability),
$$\Ep{S}{\hat{L}_{\overline{S}\backslash S}(\hat{X}_S)- \inf_{X\in\oWcal}\hat{L}_{\overline{S}\backslash S} (X)}\leq\mathbf{O}\left(\sqrt{\frac{l^2rn\log^{\nicefrac{1}{2}}(n)+b^2}{s}}\right)\;.$$

\subsection{Transductive version of Theorem~\ref{SquareRootThm}}

Let $\opp$ now denote the (unsmoothed) empirical marginals of $\overline{S}$. If $\oprowi\geq \frac{1}{Cn}$ and $\opcolj\geq \frac{1}{Cm}$ for all $i,j$, defining
$$\hat{X}_S=\arg\min_{X\in\oWcal}\hat{L}_S(X)\;,$$
we can then show that, for an $l$-Lipschitz loss $\ell$ bounded by $b$, in expectation over the split of $\overline{S}$ into training set $S$ and test set $T$,
$$\hat{L}_T(\hat{X}_S)\leq \inf_{X\in\oWcal}\hat{L}_T(X)+\mathbf{O}\left(C^{\nicefrac{1}{2}}l\cdot \sqrt{\frac{rn\log^{\nicefrac{1}{2}}(n)+b^2}{s}}\right)\;.$$

We prove this by following identical arguments as in the proof of Theorem~\ref{EmpiricalImprovementThm}, we define
$$\Sigma_{ij}=\frac{\indicator{(i,j)\in S}}{s\sqrt{\oprowi\opcolj}}\;,$$
and obtain $\norm{\Sigma_{(i)}}_2^2,\norm{\Sigma^{(j)}}_2^2\leq\frac{2Cn}{s}$ for all $i,j$, which yields
$$\Ep{S}{\hat{L}_{\overline{S}\backslash S}(\hat{X}_S)- \inf_{X\in\oWcal}\hat{L}_{\overline{S}\backslash S} (X)}\leq\mathbf{O}\left(\sqrt{\frac{Cl^2rn\log^{\nicefrac{1}{2}}(n)+b^2}{s}}\right)\;.$$
In fact, we can obtain the same result with a weaker requirement on $\opp$, namely
\begin{align*}&\frac{s}{n}\max\left\{\max_i\|\Sigma_{(i)}\|^2_2,\max_j\|\Sigma^{(j)}\|^2_2\right\}\leq \max\left\{\max_i \frac{1}{m}\sum_{j=1}^m \frac{\frac{1}{s}\indicator{(i,j)\in \overline{S}}}{\oprowi\opcolj},\max_j \frac{1}{n}\sum_{i=1}^n \frac{\frac{1}{s}\indicator{(i,j)\in \overline{S}}}{\oprowi\opcolj}\right\}\leq C\;.\end{align*}
For instance, this quantity is likely to be bounded if $\overline{S}$ is a sample drawn from a product distribution on the matrix.

\subsection{Transductive version of Theorem~\ref{CubeRootThm}}
Let $\opp$ now denote the (unsmoothed) empirical marginals of $\overline{S}$. We define
$$\hat{X}_S=\arg\min_{X\in\oWcal}\hat{L}_S(X)\;,$$
we can then show that, for an $l$-Lipschitz loss $\ell$ bounded by $b$, without any requirements on $\opp$, in expectation over the split of $\overline{S}$ into training set $S$ and test set $T$,
$$\hat{L}_T(\hat{X}_S)\leq \inf_{X\in\oWcal}\hat{L}_T(X)+\mathbf{O}\left((l+b)\cdot \sqrt[3]{\frac{rn\log(n)}{s}}\right)\;.$$

We prove this by combining the proof techniques used in the proofs of Theorems~\ref{CubeRootThm} and~\ref{EmpiricalImprovementThm}. Define
$$T^0_S=\left\{t \ : \ 1\leq t\leq 2s, \prow{i_t}\text{ or }\pcol{j_t}<\sqrt[3]{ \frac{l^2r\log(n)}{b^2sn^2}}\right\}\;, T^1_S=\{1,\dots,2s\}\backslash T^0_S\;.$$
We then have
\begin{align*}
&\hat{\Rcal}_S(\ell\circ \oWcal)=\Ep{\sigma\sim\{\pm1\}^s}{\sup_{X\in\oWcal}\frac{1}{s}\sum_{t=1}^s\sigma_t\ell(X_{i_tj_t},Y_{i_tj_t})}\\
&\leq \Ep{\sigma\sim\{\pm1\}^{n\times m}}{\sup_{X\in\oWcal}\frac{1}{s}\sum_{ij}\ell(X_{i_tj_t},Y_{i_tj_t})\sigma_{ij}\cdot \indicator{(i,j)\in S}}\\
&\leq \Ep{\sigma}{\sup_{X\in\oWcal}\frac{1}{s}\sum_{ij}\ell(X_{i_tj_t},Y_{i_tj_t})\sigma_{ij}\cdot \indicator{(i,j)\in S,\text{ and }\oprowi,\opcolj\geq \sqrt[3]{ \frac{l^2r\log(n)}{b^2sn^2}}}}\\
& \ \ \ \ \ +\Ep{\sigma}{\sup_{X\in\oWcal}\frac{1}{s}\sum_{ij}\ell(X_{i_tj_t},Y_{i_tj_t})\sigma_{ij}\cdot \indicator{(i,j)\in S,\text{ and }\oprowi\text{ or }\opcolj<\sqrt[3]{ \frac{l^2r\log(n)}{b^2sn^2}}}}\\
&\doteq \text{(Term 1)} + \text{(Term 2)};.\end{align*}
Now define matrix $\Sigma$ via
$$\Sigma_{ij}=\frac{\indicator{(i,j)\in S,\text{ and }\oprowi,\opcolj\geq \sqrt[3]{ \frac{l^2r\log(n)}{b^2sn^2}}}}{s\sqrt{\oprowi\opcolj}}\;.$$
Following the same arguments as in the proof of Theorem~\ref{EmpiricalImprovementThm}, we obtain for all $i,j$,
$$\|\Sigma_{(i)}\|^2_2,\|\Sigma^{(j)}\|^2_2\leq \frac{4}{s}\cdot \sqrt[3]{ \frac{b^2sn^2}{l^2r\log(n)}}$$
Therefore, using the same arguments as in the proof of Theorem~\ref{CubeRootThm},
$$\text{(Term 1)}\leq l\sqrt{r}\mathbf{O}\left(\log^{\nicefrac{1}{4}}(n)\sqrt{ \frac{4}{s}\cdot \sqrt[3]{ \frac{b^2sn^2}{l^2r\log(n)}}}\right)=\mathbf{O}\left(\sqrt[3]{\frac{l^2brn\log(n)}{s}}\right)\;.$$

Next we have
\begin{align*}
&\text{(Term 2)}=\Ep{\sigma}{\sup_{X\in\oWcal}\frac{1}{s}\sum_{ij}\ell(X_{i_tj_t},Y_{i_tj_t})\sigma_{ij}\cdot \indicator{(i,j)\in S,\text{ and }\oprowi\text{ or }\opcolj<\sqrt[3]{ \frac{l^2r\log(n)}{b^2sn^2}}}}\\
&\leq \sup_{X\in\oWcal}\frac{1}{s}\sum_{ij}\ell(X_{i_tj_t},Y_{i_tj_t})\cdot \indicator{(i,j)\in S,\text{ and }\oprowi\text{ or }\opcolj<\sqrt[3]{ \frac{l^2r\log(n)}{b^2sn^2}}}\\
&\leq\frac{1}{s}\sum_{ij}b\cdot \indicator{(i,j)\in \overline{S},\text{ and }\oprowi\text{ or }\opcolj<\sqrt[3]{ \frac{l^2r\log(n)}{b^2sn^2}}}\\
&\leq \frac{1}{s}\sum_{i:\oprowi<\sqrt[3]{ \frac{l^2r\log(n)}{b^2sn^2}}} \left(\sum_{j:(i,j)\in \overline{S}} b\right) + \frac{1}{s}\sum_{j:\opcolj<\sqrt[3]{ \frac{l^2r\log(n)}{b^2sn^2}}} \left(\sum_{i:(i,j)\in \overline{S}} b\right)\\
&\leq \frac{2n}{s}\left(b\cdot 2s\cdot\sqrt[3]{ \frac{l^2r\log(n)}{b^2sn^2}}\right) \leq \mathbf{O}\left(\sqrt[3]{\frac{l^2rbn\log(n)}{s}}\right)\;.\\
\end{align*}

Combining the two, we get
$\Rcal_s(\ell\circ\oWcal)\leq  \mathbf{O}\left(\sqrt[3]{\frac{l^2rbn\log(n)}{s}}\right)$,
and therefore, in expectation over the split of $\overline{S}$ into $S$ and $T$,
$$\hat{L}_T(\hat{X}_S)\leq \inf_{X\in\oWcal}\hat{L}_T(X)+ \mathbf{O}\left((l+b)\cdot \sqrt[3]{\frac{rn\log(n)}{s}}\right)\;.$$

\end{document}